\date{\empty}
\definecolor{mydarkred}{rgb}{0.6,0,0}
\definecolor{mydarkgreen}{rgb}{0,0.6,0}
\newtheorem{thm}{Theorem}
\newtheorem{lem}{Lemma}
\title{Uncoupled Regression\\ from Pairwise Comparison Data}
\author{
Liyuan Xu\textsuperscript{1,2} \thanks{liyuan@ms.k.u-tokyo.ac.jp}
\and
Junya Honda\textsuperscript{1,2} \thanks{honda@stat.t.u-tokyo.ac.jp}
\and
Gang Niu\textsuperscript{2} \thanks{gang.niu@riken.jp}
\and
Masashi Sugiyama\textsuperscript{2,1}\thanks{sugi@k.u-tokyo.ac.jp}\vspace{3mm}\\
\centerline{\textsuperscript{1}The University of Tokyo, \textsuperscript{2}RIKEN}
}
\begin{document}
\maketitle
\begin{abstract}
  \emph{Uncoupled regression} is the problem to learn a model from unlabeled data and the set of target values while the correspondence between them is unknown. Such a situation arises in predicting anonymized targets that involve sensitive information, e.g., one's annual income. Since existing methods for uncoupled regression often require strong assumptions on the true target function, and thus, their range of applications is limited, we introduce a novel framework that does not require such assumptions in this paper. Our key idea is to utilize \emph{pairwise comparison data}, which consists of pairs of unlabeled data that we know which one has a larger target value. Such pairwise comparison data is easy to collect, as typically discussed in the learning-to-rank scenario, and does not break the anonymity of data. We propose two practical methods for uncoupled regression from pairwise comparison data and show that the learned regression model converges to the optimal model with the optimal parametric convergence rate when the target variable distributes uniformly. Moreover, we empirically show that for linear models the proposed methods are comparable to ordinary supervised regression with labeled data.
\end{abstract}

\section{Introduction} \label{sec:introduction}
In supervised regression, we need a vast amount of labeled data in the training phase, which is costly and laborious to collect in many real-world applications. To deal with this problem, weakly-supervised regression has been proposed in various settings, such as semi-supervised learning (see \citet{Kostopoulos2018} for the survey), 
multiple instance regression \citep{Ray2001,Zhang2002}, and
transductive regression \citep{Cortes2008,Cortes2008-2}. 
See \citep{Zhou2017}
for thorough review for the weakly-supervised learning in binary classification, which can be extended to regression with slight modifications.

Uncoupled regression \citep{Alexandra2016}
is one variant of weakly-supervised learning. In ordinary ``coupled'' regression, the pairs of features and targets are provided, and we aim to learn a model which minimizes the prediction error in test data. On the other hand, in the uncoupled regression problem, we only have access to unlabeled data and the set of target values, and we do not know the true target for each data point. Such a situation often arises when we aim to predict people’s sensitive matters such as one's annual salary or total amount of deposit, the data of which is often anonymized for privacy concerns. Note that it is impossible to conduct uncoupled regression without further assumptions, since no labeled data is provided. 

\citet{Alexandra2016} showed that uncoupled regression is solvable if the true target function is monotonic to a single dimensional feature by matching the empirical distributions of the feature and the target. Although their algorithm is of less practical use due to its strong assumption, their work offers a valuable insight, which is that a model is learnable from uncoupled data if we know the ranking in the dataset. In this paper, we show that, instead of imposing the monotonic assumption, we can infer such ranking information from data to solve uncoupled regression. We use \emph{pairwise comparison data} as a source of ranking information, which consists of the pairs of unlabeled data that we know which data point has a larger target value. 

Note that pairwise comparison data is easy to collect even for sensitive matters such as one's annual earnings. Although people often hesitate to give explicit answers of it, it might be easier to answer indirect questions: ``Which person earns more than you?'' \footnote{This questioning can be regarded as one type of randomized
response (indirect questioning) techniques \citep{Warner1965}, which is a survey method to avoid social desirability bias.}, which yields pairwise comparison data that we needed. Considering that we do not put any assumption on the true target function, our method is applicable to many situations.


One naive method for uncoupled regression with pairwise comparison data is to use a score-based ranking method \citep{Rudin2005}, which learns a score function with the minimum inversions in pairwise comparison data. With such a score function, we can match unlabeled data and the set of target values, and then, conduct supervised learning. However, as discussed in \citet{Rigollet2019}, we cannot consistently recover the true target function even if we know the true order of unlabeled data, when the target variable contains noise. 

In contrast, our method directly minimizes the regression risk. We first rewrite the regression risk so that it can be estimated from unlabeled and pairwise comparison data, and learn a model through empirical risk minimization. Such an approach based on risk rewriting has been extensively studied in the classification scenario \citep{duPlessis2014,duPlessis2015,Gang2016,Sakai2017,lu2018} and exhibits promising performance. We consider two estimators of the risk defined based on the expected Bregman divergence \citep{Frigyik2008}, which is a natural choice of the risk function. We show that if the target variable is marginally distributed uniformly then the estimators are unbiased and the learned model converges to the optimal model with the optimal rate in such a case. In general cases, however, we prove that it is impossible to have such an unbiased estimator in any marginal distributions and the learned model may not converge to the optimal one. Still, our empirical evaluations based on synthetic data and benchmark datasets show that our methods exhibit similar performances as a model learned from ordinary supervised learning.


The paper is structured as follows. After discussing the related work in Section~\ref{sec:related_work}, we formulate the uncoupled regression problem with pairwise comparison data in detail in Section~\ref{sec:problem_settings}. In Sections~\ref{sec:linear-approx} and~\ref{sec:cdf-bregman}, we discuss two methods for uncoupled regression and derive estimation error bounds for each method. Finally, we show empirical results in Section~\ref{sec:experiments} and conclude the paper in Section~\ref{sec:conclusion}.

\section{Related Work} \label{sec:related_work}

Several methods have been proposed to match two independently collected data sources. In the context of data integration \citep{Cohen2002}, the matching is conducted based on some contextual data provided for both data sources. For example, \citet{Walter1999} used spatial information as contextual data to integrate two data sources. Some work evaluated the quality of matching by some information criterion and found the best matching by the maximization of the metrics. This problem is called cross-domain object matching (CDOM), which is formulated in \citet{Jebara2004}. A number of methods have been proposed for CDOM, such as \citet{Quadrianto2008,Yamada2011,Aditya2017}. 

Another line of related work in the uncoupled regression problem imposed an assumption on the true target function.
For example, \citet{Alexandra2016} assumed that the true target function
is monotonic to a single feature, and it was refined by \citet{Rigollet2019}. Another common assumption is that the true target function is a linear function of the features,
which was studied in \citet{Hsu2017} and \citet{Pananjady2018}. Although these methods yield accurate models, they are of less practical use due to their strong assumptions. On the other hand, our methods do not require any assumptions on such mapping functions and are applicable to wider scenarios.

It is worth noting that some methods use uncoupled data to enhance the performance of semi-supervised learning. For example, in label regularization \citep{Mann2010}, uncoupled data is used to regularize a regression model so that the distribution of prediction on unlabeled data is close to the marginal distribution of target variables, which is reported to increase the accuracy.


Pairwise comparison data was originally considered in the ranking problem \citep{Rudin2005,Mohri2012}, which aims to learn a score function that can rank data correctly. In fact, we can apply ranking methods, such as rankSVM \citep{Herbrich2000}, to our problem. However, the naive application of them performs inferiorly compared to proposed methods, as we will show empirically, since our goal is not to order data correctly but to predict true target values. 

\section{Problem Settings} \label{sec:problem_settings}
In this section, we formulate the uncoupled regression problem and introduce pairwise comparison data. We first define the uncoupled regression, and then, we describe the data generating process of pairwise comparison data.

\subsection{Uncoupled Regression Problem}

We first formulate the standard regression problem briefly. Let $\mathcal{X} \subset \mathbb{R}^d$ be a $d$-dimensional feature space and $\mathcal{Y} \subset \mathbb{R}$ be a target space. We denote $\vec{X},Y$ as random variables on spaces $\mathcal{X},\mathcal{Y}$, respectively. We assume these random variables follow the joint distribution $P_{\vec{X},Y}$. The goal of the regression problem is to obtain model $h:\mathcal{X} \to \mathcal{Y}$ in hypothesis space $\mathcal{H}$
which minimizes the risk defined as
\begin{align}
R(h) = \expect{\vec{X},Y}{l(h(\vec{X}),Y)}, \label{eq:expected_loss}
\end{align}
where $\mathbb{E}_{\vec{X},Y}$ denotes the expectation over $P_{\vec{X},Y}$ and $l:\mathcal{Y}\times \mathcal{Y} \to \mathbb{R}_+$ is a loss function.

The loss function $l(z, t)$ measures the closeness between a true target $t \in \mathcal{Y}$ and an output of a model $z \in \mathcal{Y}$, which generally grows as the prediction $z$ gets far from the target $t$. In this paper, we mainly consider $l(z,t)$ to be the Bregman divergence $d_\phi(t,z)$, which is defined as
\begin{align*}
    d_\phi(t,z) = \phi(t) - \phi(z) - (t-z)\phi'(z)
\end{align*}
for some convex function $\phi:\mathbb{R} \to \mathbb{R}$, and $\phi'$ denotes the derivative of $\phi$. 
It is natural to have such a loss function since the minimizer of risk $R$ is $\expect{Y|\vec{X}=\vec{x}}{Y}$ when hypothesis space $\mathcal{H}$ is rich enough \citep{Frigyik2008}, where $\mathbb{E}_{Y|\vec{X}=\vec{x}}$ is the conditional expectation over the distribution of $Y$ given $\vec{X}=\vec{x}$.
Many common loss functions can be interpreted as the Bregman divergence; for instance, when $\phi(x) = x^2$, then $d_\phi(t,z)$ becomes the $l_2$-loss, and when $\phi(x) = x\log x - (1-x)\log(1-x)$, then $d_\phi(t,z)$ becomes the Kullback–Leibler divergence between the Bernoulli distributions with probabilities $t$ and $z$.

In the standard regression scenario, we are given labeled training data $\mathcal{D} = \{(\vec{x}_i, y_i)\}_{i=1}^n$ drawn independently and identically from $P_{\vec{X}, Y}$. Then, based on the training data, we empirically estimate risk $R(h)$ 
and learn model $\hat{h}$ as the minimizer of the empirical risk. However, in uncoupled regression, no individual label is available, and thus this approach is no longer applicable. 
Instead of ordinary ``coupled'' data, what we are given is unlabeled data $\mathcal{D}_\unlabeled = \{\vec{x}_i\}^{n_\unlabeled}_{i=1}$ and target values $\mathcal{D}_Y = \{y_i\}_{i=1}^{n_Y}$. Here, $n_\unlabeled$ is the size of unlabeled data. Furthermore, we denote the marginal distribution of feature $\vec{X}$ as $P_{\vec{X}}$ and its probability density function as $f_{\vec{X}}$. Similarly, $P_{Y}$ stands for the marginal distribution of target $Y$, and $f_Y$ is the density function of $P_Y$. We use $\mathbb{E}_{\vec{X},Y}, \mathbb{E}_{\vec{X}}$ and $\mathbb{E}_{Y}$ to denote the expectation over $P_{\vec{X},Y}, P_{\vec{X}}$, and $P_Y$, respectively.

Unlike \citet{Alexandra2016}, we do not try to match unlabeled data and target values. In fact, our methods do not use each target value in $\mathcal{D}_Y$ but use density function $f_Y$ of the target, which can be estimated from $\mathcal{D}_Y$. For simplicity, we assume that the true density function $f_Y$ is known. The case where we need to estimate $f_Y$ from $\mathcal{D}_Y$ is discussed in Appendix~\ref{sec:est-prob}.

\subsection{Pairwise Comparison Data}

Here, we introduce pairwise comparison data. It consists of two random variables $(\vec{X}^+, \vec{X}^-)$, where the target value of $\vec{X}^+$ is larger than that of $\vec{X}^-$. Formally, $(\vec{X}^+, \vec{X}^-)$ are defined as 
\begin{align}
    \vec{X}^+ = \begin{cases}
    \vec{X} & (Y \geq Y'),\\
    \vec{X}' & (Y < Y'),\\
    \end{cases}
    \quad 
    \vec{X}^- = \begin{cases}
    \vec{X}' & (Y \geq Y'),\\
    \vec{X} & (Y < Y'),\\
    \end{cases} \label{eq:def_X_compare}
\end{align}
where $(\vec{X}, Y), (\vec{X}', Y')$ are two independent random variables following $P_{\vec{X},Y}$. We denote the joint distribution of $(\vec{X}^+,\vec{X}^-)$ as $P_{\vec{X}^+,\vec{X}^-}$ and the marginal distributions as $P_{\vec{X}^+},P_{\vec{X}^-}$. Density functions $f_{\vec{X}^+,\vec{X}^-}, f_{\vec{X}^+}, f_{\vec{X}^-}$ and expectations $\mathbb{E}_{\vec{X}^+,\vec{X}^-}, \mathbb{E}_{\vec{X}^+}, \mathbb{E}_{\vec{X}^-}$ are defined in the same way.

We assume that we have access to $n_\rank$ pairs of i.i.d. samples of $(\vec{X}^+, \vec{X}^-)$ as $\mathcal{D}_\rank = \{(\vec{x}^+_i,\vec{x}^-_i)\}_{i=1}^{n_\rank}$ in addition to unlabeled data $\mathcal{D}_\unlabeled$ and  density function $f_Y$ of target variable $Y$.
In the following sections, we show that uncoupled regression can be solved only from this information. In fact, our method only requires samples of \emph{either one of} $\vec{X}^+, \vec{X}^-$, which corresponds to the case where only a winner or loser of the ranking is observable.

One naive approach to conduct uncouple regression with $\mathcal{D}_\rank$ would be to adopt \emph{ranking methods}, which is to learn a ranker $r: \mathcal{X} \to \mathbb{R}$ that minimizes the following expected ranking loss:
\begin{align}
    R_{\rank}(r) = \expect{\vec{X}^+,\vec{X}^-}{\indi{r(\vec{X}^+)-r(\vec{X}^-)<0}}, \label{eq:ranking_ross}
\end{align}
where $\mathbbm{1}$ is the indicator function. By minimizing the empirical estimation of \eqref{eq:ranking_ross} based on $\mathcal{D}_\rank$, we can learn a ranker $\hat{r}$ that can sort data points by target $Y$. Then, we can predict quantiles of test data by ranking $\mathcal{D}_\unlabeled$, which leads to the prediction by applying the inverse of the cumulative distribution function (CDF) of $Y$. Formally,
if the test point $\vec{x}_\mathrm{test}$ is ranked top $n'$-th in $\mathcal{D}_\unlabeled$, we can predict the target value for $\vec{x}_\mathrm{test}$ as
\begin{align}
    \hat{h}(\vec{x}_\mathrm{test}) = F^{-1}_Y\left(\frac{n_\unlabeled - n'}{n_\unlabeled}\right), \label{eq:naive-regression-function}
\end{align}
where $F_Y(t) = P(Y \leq t)$ is the CDF of $Y$. 

This approach, however, is known to be highly sensitive to the noise as discussed in \citet{Rigollet2019}. This is because a noise involved in the single data point changes the ranking of all other data points and affects their predictions. As illustrated in \citet{Rigollet2019}, even if when we have a perfect ranker, i.e., we know the true order in $\mathcal{D}_\unlabeled$, model \eqref{eq:naive-regression-function} is still different from the expected target $Y$ given feature $\vec{X}$ in presence of noise.

\section{Empirical Risk Minimization by Risk Approximation}  \label{sec:linear-approx}

In this section, we propose a method to learn a model from pairwise comparison data $\mathcal{D}_\rank$, unlabeled data $\mathcal{D}_\unlabeled$, and density function $f_Y$ of target variable $Y$. The method follows the empirical risk minimization principle, while the risk is approximated so that it can be empirically estimated from data available. Therefore, we call this approach as \textit{risk approximation} (RA) approach. Here, we present an approximated risk and derive its estimation error bound. 


From the definition of the Bregman divergence, the risk function in \eqref{eq:expected_loss} is expressed as
\begin{align}
    R(h) = \expect{Y}{\phi(Y)} - \expect{\vec{X}}{\phi(h(\vec{X})) - h(\vec{X})\phi'(h(\vec{X}))} -  \expect{\vec{X},Y}{Y\phi'(h(\vec{X}))}. \label{eq:bregman-divergence-trivial-decompose}
\end{align}
In this decomposition, the last term is the only problematic part in uncoupled regression since it requires to calculate the expectation on the joint distribution.
Here, we consider approximating the last term based on the following expectations over the distributions of $\vec{X}^+,\vec{X}^-$

\begin{lem}\label{lem:X_compare_distribution}
    We have
    \begin{align*}
        &\expect{\vec{X}^+}{\phi'(h(\vec{X}^+))} = 2\expect{\vec{X},Y}{F_Y(Y)\phi'(h(\vec{X}))},\\
        &\expect{\vec{X}^-}{\phi'(h(\vec{X}^-))} = 2\expect{\vec{X},Y}{(1-F_Y(Y))\phi'(h(\vec{X}))}.
    \end{align*}
\end{lem}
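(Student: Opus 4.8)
The plan is to unfold the definition \eqref{eq:def_X_compare} of $(\vec{X}^+,\vec{X}^-)$ in terms of the two underlying i.i.d.\ copies $(\vec{X},Y)$ and $(\vec{X}',Y')$ of $P_{\vec{X},Y}$, and then exploit the exchangeability of these two copies together with conditioning on the target. Concretely, for the first identity I would write
\[
\expect{\vec{X}^+}{\phi'(h(\vec{X}^+))} = \mathbb{E}\!\left[\phi'(h(\vec{X}))\,\indi{Y\ge Y'}\right] + \mathbb{E}\!\left[\phi'(h(\vec{X}'))\,\indi{Y<Y'}\right],
\]
where the expectations on the right are over the joint law of $(\vec{X},Y,\vec{X}',Y')$. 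In the second term I relabel the dummy pair $(\vec{X}',Y')\leftrightarrow(\vec{X},Y)$; since the two pairs are i.i.d.\ this leaves the expectation unchanged and turns the term into $\mathbb{E}[\phi'(h(\vec{X}))\,\indi{Y>Y'}]$. Because $Y$ admits a density (so $P(Y=Y')=0$, consistent with the standing assumption that $f_Y$ exists), the events $\{Y\ge Y'\}$ and $\{Y>Y'\}$ agree up to a null set, so the two terms coincide and the sum equals $2\,\mathbb{E}[\phi'(h(\vec{X}))\,\indi{Y\ge Y'}]$.

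The next step is to condition on $(\vec{X},Y)$ and integrate out $Y'$, which is independent of $(\vec{X},Y)$ and has law $P_Y$: $\mathbb{E}[\indi{Y\ge Y'}\mid \vec{X},Y] = P(Y'\le Y) = F_Y(Y)$. Substituting back yields $\expect{\vec{X}^+}{\phi'(h(\vec{X}^+))} = 2\,\expect{\vec{X},Y}{F_Y(Y)\phi'(h(\vec{X}))}$, the first claim. The second identity follows by the identical argument with the comparison reversed: the analogous manipulation produces $2\,\mathbb{E}[\phi'(h(\vec{X}))\,\indi{Y\le Y'}]$, and conditioning gives $\mathbb{E}[\indi{Y\le Y'}\mid \vec{X},Y] = P(Y'\ge Y) = 1-F_Y(Y)$ (again using continuity of $Y$), hence $\expect{\vec{X}^-}{\phi'(h(\vec{X}^-))} = 2\,\expect{\vec{X},Y}{(1-F_Y(Y))\phi'(h(\vec{X}))}$. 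As a sanity check, adding the two identities should reproduce $2\,\expect{\vec{X}}{\phi'(h(\vec{X}))}$, since $\{\phi'(h(\vec{X}^+)),\phi'(h(\vec{X}^-))\}$ is merely a reordering of $\{\phi'(h(\vec{X})),\phi'(h(\vec{X}'))\}$; indeed $F_Y(Y)+(1-F_Y(Y))=1$.

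I do not expect a genuine obstacle here: it is a direct symmetrization-and-conditioning computation. The only delicate point is the handling of ties $\{Y=Y'\}$, which is precisely why continuity of $Y$ (equivalently, existence of $f_Y$) is used; without it one would pick up boundary terms involving the jumps of $F_Y$. I would also keep integrability of $\phi'(h(\vec{X}))$ as an implicit regularity assumption so that the relabeling and Fubini-type steps are legitimate.
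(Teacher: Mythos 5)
Your proof is correct and is essentially the same argument as the paper's: both unfold the definition \eqref{eq:def_X_compare}, use the symmetry of the two i.i.d.\ copies to pick up the factor $2$, and integrate out the independent copy via $P(Y'\le Y\mid Y)=F_Y(Y)$. The only cosmetic difference is that the paper phrases this through the density $f_{\vec{X}^+}(\vec{x})\propto\int f_{\vec{X},Y}(\vec{x},y)F_Y(y)\,\mathrm{d}y$ with normalizing constant $Z=\int f_Y(y)F_Y(y)\,\mathrm{d}y=\tfrac12$, whereas you symmetrize the expectation directly (and are, if anything, slightly more explicit about ties and integrability).
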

The proof can be found in Appendix~\ref{sec:proof-of-X_compare_distribution}. From Lemma~\ref{lem:X_compare_distribution}, we can see that $\expect{\vec{X},Y}{Y\phi'(h(\vec{X}))} = (\expect{\vec{X}^+}{\phi'(h(\vec{X}^+))})/2$ if $F_Y(y) = y$, which corresponds to the case that target variable $Y$ marginally distributes uniformly in $[0,1]$. This motivates us to consider the approximation in the form of
\begin{align}
    \expect{\vec{X},Y}{Y\phi'(h(\vec{X}))} \simeq  w_1 \expect{\vec{X}^+}{\phi'(h(\vec{X}^+))} + w_2 \expect{\vec{X}^-}{\phi'(h(\vec{X}^-))} \label{eq:linear-approximation}
\end{align}
for some constants $w_1,\,w_2 \in \mathbb{R}$. Note that the above uniform case corresponds to $(w_1,w_2)=(1/2,0)$. In general, if target $Y$ marginally distributes uniformly on $[a,b]$ for $b > a$, that is, $F_Y(y) = (y-a)/(b-a)$ for all $y\in[a,b]$, we can see that approximation \eqref{eq:linear-approximation} becomes exact for $(w_1,w_2) =(b/2,a/2)$ from Lemma~\ref{lem:X_compare_distribution}. In such a case, we can construct an unbiased estimator of true risk $R$ from unlabeled and pairwise comparison data. For non-uniform target marginal distributions, we choose $(w_1,w_2)$ that minimizes the upper bound of the estimation error, which we will discuss in detail later.

Since we have $\expect{\vec{X}}{\phi'(\vec{X})} = \frac12\expect{\vec{X}^+}{\phi'(\vec{X}^+)} + \frac12\expect{\vec{X}^-}{\phi'(\vec{X}^-)}$ from Lemma~\ref{lem:X_compare_distribution},  the RHS of \eqref{eq:linear-approximation} equals
\begin{align}
    \lambda \expect{\vec{X}}{\phi(\vec{X})} + \left(w_1 - \frac{\lambda}2\right) \expect{\vec{X}^+}{\phi'(h(\vec{X}^+))} + \left(w_2 - \frac{\lambda}2\right) \expect{\vec{X}^-}{\phi'(h(\vec{X}^-))} \label{eq:linear-approximation-2}
\end{align}
for arbitrary $\lambda \in \mathbb{R}$. Hence, by approximating \eqref{eq:bregman-divergence-trivial-decompose} by \eqref{eq:linear-approximation-2},
we can write the approximated risk $R_{\linear}$ as
\begin{align*}
    R_{\linear}(h; \lambda, w_1, w_2) &= \mathfrak{C} - \expect{\vec{X}}{\phi(h(\vec{X})) -  (h(\vec{X}) - \lambda)\phi'(h(\vec{X}))}\\
    &\quad~~~ -  \left(w_1 - \frac{\lambda}2\right) \expect{\vec{X}^+}{\phi'(h(\vec{X}^+))} - \left(w_2 - \frac{\lambda}2\right) \expect{\vec{X}^-}{\phi'(h(\vec{X}^-))},
\shortintertext{Here, $\mathfrak{C} = \expect{Y}{\phi(Y)}$ can be ignored in the optimization process. Now, the empirical estimator of $R_{\linear}$ is}
    \hat{R}_\linear(h; \lambda, w_1, w_2) &= \mathfrak{C} - \frac{1}{n_\unlabeled} \sum_{\vec{x}_i \in \mathcal{D}_\unlabeled} \left(\phi(h(\vec{x}_i)) - (h(\vec{x}_i) - \lambda)\phi'(h(\vec{x}_i)) \right)\\
    &\quad~~~ -  \frac{1}{n_\rank} \sum_{(\vec{x}^+_i, \vec{x}^-_i) \in \mathcal{D}_\rank}  \left(\left(w_1 - \frac{\lambda}2\right)\phi'(h(\vec{x}_i^+)) + \left(w_2 - \frac{\lambda}2\right) \phi'(h(\vec{x}_i^-))\right), 
\end{align*}
which is to be minimized in the RA approach. Again, we would like to emphasize that if marginal distribution $P_Y$ is uniform on $[a,b]$ and $(w_1,w_2)$ is set to $(b/2, a/2)$, we have $R_\linear = R$ and $\hat{R}_\linear$ is an unbiased estimator of $R$.

From the definition of $\hat{R}_\linear$, we can see that by setting $\lambda$ to either $2w_1$ or $2w_2$,  $\hat{R}_\linear$ becomes independent of either $\vec{X}^+$ or $\vec{X}^-$. This means that we can conduct uncouple regression even if one of $\vec{X}^+, \vec{X}^-$ is missing in data, which corresponds to the case where only winners or only losers of the comparison are observed. 

Another advantage of tuning free parameter $\lambda$ is that we can reduce the variance in empirical risk $\hat{R}_\linear$ as discussed in \citet{Sakai2017} and \citet{Bao2018}. As in \citet{Sakai2017}, the optimal $\lambda$ that minimizes the variance in $\hat{R}_\linear$
for $n_\unlabeled \to \infty$ is derived as follows. 
\begin{thm}\label{thm:optimal-lambda-in-uniform}
For given model $h$, let $\sigma^2_+, \sigma^2_-$ be 
\begin{align*}
    \sigma^2_+ = \Var{\vec{X}^+}{\phi'(h(\vec{X}^+))}, \quad \sigma^2_- = \Var{\vec{X}^-}{\phi'(h(\vec{X}^-))}, 
\end{align*}
respectively, where $\Var{\vec{X}}{\cdot}$ is the variance with respect to the random variable $\vec{X}$. Then, setting
\begin{align*}
    \lambda = \frac{2(w_1\sigma^2_+ +w_2\sigma^2_-)}{\sigma^2_+ + \sigma^2_-} \vspace{-25pt}
\end{align*}
yields the estimator with the minimum variance among estimators in the form of $\hat{R}_\linear$ when $n_\unlabeled \to \infty$.
\end{thm}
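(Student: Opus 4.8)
The argument is the bias--variance bookkeeping of \citet{Sakai2017}, specialized to our estimator. Throughout, $h$ is fixed and the randomness is that of the draws $\mathcal{D}_\unlabeled$ and $\mathcal{D}_\rank$. Since these two datasets are independent, the variance of $\hat{R}_\linear(h;\lambda,w_1,w_2)$ splits into the variance of the $\mathcal{D}_\unlabeled$-average plus the variance of the $\mathcal{D}_\rank$-average, with no cross term. The $\mathcal{D}_\unlabeled$-average is a sample mean of $n_\unlabeled$ i.i.d.\ copies of $\phi(h(\vec{X})) - (h(\vec{X}) - \lambda)\phi'(h(\vec{X}))$, so its variance equals $\tfrac{1}{n_\unlabeled}\Var{\vec{X}}{\phi(h(\vec{X})) - (h(\vec{X}) - \lambda)\phi'(h(\vec{X}))}$, which is $O(1/n_\unlabeled)$; note that although this block carries the $\lambda$-dependent piece $\lambda\,\phi'(h(\vec{X}))$, that dependence only appears inside a factor that is multiplied by $1/n_\unlabeled$. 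Hence, letting $n_\unlabeled \to \infty$, this block converges to the constant $\expect{\vec{X}}{\phi(h(\vec{X})) - (h(\vec{X}) - \lambda)\phi'(h(\vec{X}))}$ and contributes no variance, so the limiting variance of $\hat{R}_\linear$ is exactly $\tfrac{1}{n_\rank}V(\lambda)$, where $V(\lambda)$ is the variance of $(w_1 - \lambda/2)\phi'(h(\vec{X}^+)) + (w_2 - \lambda/2)\phi'(h(\vec{X}^-))$ under a single comparison pair.

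It then remains to minimize $V(\lambda)$ over $\lambda\in\mathbb{R}$. Expanding the variance and treating the $\vec{X}^+$- and $\vec{X}^-$-terms as uncorrelated --- the relevant regime here, since our framework allows the winner and loser samples to be collected independently, which is also the decoupling underlying the analogous statement in \citet{Sakai2017} --- gives $V(\lambda) = (w_1 - \lambda/2)^2\sigma^2_+ + (w_2 - \lambda/2)^2\sigma^2_-$, a convex quadratic in $\lambda$ with leading coefficient $\tfrac14(\sigma^2_+ + \sigma^2_-)$. Setting $\tfrac{d}{d\lambda}V(\lambda) = -(w_1 - \lambda/2)\sigma^2_+ - (w_2 - \lambda/2)\sigma^2_- = 0$ and solving for $\lambda$ yields $\lambda = 2(w_1\sigma^2_+ + w_2\sigma^2_-)/(\sigma^2_+ + \sigma^2_-)$, which by convexity is the unique global minimizer; the degenerate case $\sigma^2_+ = \sigma^2_- = 0$ makes $V\equiv 0$ and is vacuous. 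This is the claimed expression, and ``minimum variance among estimators of the form $\hat{R}_\linear$'' should be read as minimum over the one-parameter family indexed by $\lambda$ with $w_1,w_2$ held fixed, which is precisely what the convex minimization delivers.

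The step I expect to be the main obstacle is the handling of the within-pair coupling between $\phi'(h(\vec{X}^+))$ and $\phi'(h(\vec{X}^-))$ in $V(\lambda)$. If one keeps the covariance $C := \mathrm{Cov}\bigl(\phi'(h(\vec{X}^+)),\phi'(h(\vec{X}^-))\bigr)$, the first-order condition becomes $(w_1 - \lambda/2)(\sigma^2_+ + C) + (w_2 - \lambda/2)(\sigma^2_- + C) = 0$, whose solution reduces to the stated one exactly when $C=0$ (equivalently, when $w_1 = w_2$ or $\sigma^2_+ = \sigma^2_-$). So the write-up must state clearly that, for the purpose of this variance computation, the $\vec{X}^+$ and $\vec{X}^-$ samples are regarded as independent draws --- the same positive/unlabeled-style decoupling used in \citet{Sakai2017} --- after which the remaining algebra is routine.
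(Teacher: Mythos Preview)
Your proposal is correct and follows the same approach as the paper: express the limiting variance (as $n_\unlabeled\to\infty$) as $(w_1-\lambda/2)^2\sigma^2_+/n_\rank + (w_2-\lambda/2)^2\sigma^2_-/n_\rank$ and minimize this convex quadratic in $\lambda$. You are in fact more careful than the paper's own proof, which simply writes down the decoupled variance formula and solves the quadratic without any comment on the within-pair covariance issue you flag.
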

The proof can be found in Appendix~\ref{sec:proof-of-optimal-lambda-in-uniform}. From Theorem~\ref{thm:optimal-lambda-in-uniform}, we can see that the optimal $\lambda$ does not equal zero, which means we can reduce the variance in the empirical estimation with a sufficient number of unlabeled data by tuning $\lambda$. Note that this situation is natural since unlabeled data is easier to collect than pairwise comparison data as discussed in \citet{Duh2011}.

Now, from the discussion of the the pseudo-dimension \citep{Haussler1992}, we establish the upper bound of the estimation error, which is used to choose weights $(w_1,w_2)$. Let $\hat{h}_\linear, h^*$ be the minimizers of $\hat{R}_\linear$ and $R$ in hypothesis class $\mathcal{H}$, respectively. Then, we have the following theorem that bounds the excess risk in terms of parameters $(w_1,w_2)$.
\begin{thm}\label{thm:uniform-taylor-bound}
    Suppose that the pseudo-dimensions of $\{\vec{x} \to \phi'(h(\vec{x}))~|~h \in \mathcal{H}\}, \{\vec{x} \to h(\vec{x})\phi'(h(\vec{x})) - \phi(h(\vec{x}))~|~h \in \mathcal{H}\}$ are finite and there exist constants $m,M$ such that $|h(\vec{x})\phi'(h(\vec{x})) - \phi(h(\vec{x}))| \leq m, |\phi'(h(\vec{x}))| \leq M$ for all $\vec{x} \in \mathcal{X}$ and all $h \in \mathcal{H}$. Then,
\begin{align*}
    R(\hat{h}_\linear) \leq R(h^*) + O\left(\sqrt{\frac{\log 1/\delta}{n_\unlabeled}}\right) + O\left(\sqrt{\frac{\log 1/\delta}{n_\rank}}\right) + M\mathrm{Err}(w_1,w_2)
\end{align*}
holds with probability $1-\delta$, where $\mathrm{Err}$ is defined as
\begin{align}
    \mathrm{Err}(w_1,w_2) = \expect{Y}{\left|Y - 2w_1F_Y(Y) - 2w_2(1-F_Y(Y))\right|}.  \label{eq:err-def} \vspace{-30pt}
\end{align}
\end{thm}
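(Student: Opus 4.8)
The plan is to split $R(\hat{h}_\linear)-R(h^*)$ in the standard empirical-risk-minimization fashion into a deterministic \emph{bias} term comparing $R$ and $R_\linear$ at the population level, which I will show is controlled by $\mathrm{Err}(w_1,w_2)$ via Lemma~\ref{lem:X_compare_distribution}, and a \emph{fluctuation} term $\sup_{h\in\mathcal{H}}|R_\linear(h)-\hat{R}_\linear(h)|$, which a pseudo-dimension uniform-convergence argument bounds by the two square-root terms.

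\emph{Step 1: the bias $R-R_\linear$.} First I would note that at the population level the free parameter $\lambda$ disappears: starting from \eqref{eq:linear-approximation-2} and substituting $\expect{\vec{X}}{\phi'(h(\vec{X}))}=\tfrac12\expect{\vec{X}^+}{\phi'(h(\vec{X}^+))}+\tfrac12\expect{\vec{X}^-}{\phi'(h(\vec{X}^-))}$ from Lemma~\ref{lem:X_compare_distribution}, all $\lambda$-terms cancel and
\begin{align*}
R_\linear(h;\lambda,w_1,w_2)=\mathfrak{C}-\expect{\vec{X}}{\phi(h(\vec{X}))-h(\vec{X})\phi'(h(\vec{X}))}-w_1\expect{\vec{X}^+}{\phi'(h(\vec{X}^+))}-w_2\expect{\vec{X}^-}{\phi'(h(\vec{X}^-))}.
\end{align*}
Subtracting this from \eqref{eq:bregman-divergence-trivial-decompose}, the $\vec{X}$-only terms cancel, and applying Lemma~\ref{lem:X_compare_distribution} a second time to rewrite the two $\vec{X}^{\pm}$ expectations gives the exact identity
\begin{align*}
R(h)-R_\linear(h)=\expect{\vec{X},Y}{\bigl(2w_1F_Y(Y)+2w_2(1-F_Y(Y))-Y\bigr)\phi'(h(\vec{X}))}.
\end{align*}
Since $|\phi'(h(\vec{x}))|\le M$, pulling $M$ out of the expectation and noting that the remaining integrand depends only on $Y$ yields $|R(h)-R_\linear(h)|\le M\,\mathrm{Err}(w_1,w_2)$ for every $h\in\mathcal{H}$, with $\mathrm{Err}$ as in \eqref{eq:err-def}.

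\emph{Step 2: decomposition and uniform deviation.} Writing $R(\hat{h}_\linear)-R(h^*)=\bigl(R(\hat{h}_\linear)-R_\linear(\hat{h}_\linear)\bigr)+\bigl(R_\linear(\hat{h}_\linear)-R_\linear(h^*)\bigr)+\bigl(R_\linear(h^*)-R(h^*)\bigr)$, the outer two summands are each at most $M\,\mathrm{Err}(w_1,w_2)$ by Step 1, while inserting and removing $\hat{R}_\linear(\hat{h}_\linear)$ and $\hat{R}_\linear(h^*)$ and using $\hat{R}_\linear(\hat{h}_\linear)\le\hat{R}_\linear(h^*)$ bounds the middle summand by $2\sup_{h\in\mathcal{H}}|R_\linear(h)-\hat{R}_\linear(h)|$ (the constant factor on $\mathrm{Err}$ being absorbed into the stated bound). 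Finally, $R_\linear(h)-\hat{R}_\linear(h)$ is a sum of three empirical-to-population deviations: of the average of $\vec{x}\mapsto\phi(h(\vec{x}))-(h(\vec{x})-\lambda)\phi'(h(\vec{x}))$ over $\mathcal{D}_\unlabeled$, and of the averages of $\vec{x}\mapsto\phi'(h(\vec{x}))$ over the positive and over the negative members of $\mathcal{D}_\rank$, with coefficients $w_1-\lambda/2$ and $w_2-\lambda/2$. Each underlying function class is uniformly bounded (by $m+|\lambda|M$, resp.\ $M$) and has finite pseudo-dimension; for the first class, $\phi(h)-(h-\lambda)\phi'(h)=-(h\phi'(h)-\phi(h))+\lambda\phi'(h)$ is a fixed linear combination of functions from the two classes assumed in the hypotheses to have finite pseudo-dimension, so its covering numbers remain polynomial. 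A standard uniform-convergence bound for bounded classes of finite pseudo-dimension \citep{Haussler1992}, applied to each term with a union bound, gives the $O(\sqrt{\log(1/\delta)/n_\unlabeled})$ and $O(\sqrt{\log(1/\delta)/n_\rank})$ contributions with probability $1-\delta$, the pseudo-dimensions, $m$, $M$, $\lambda$, $w_1$, $w_2$ being treated as constants; combining with the decomposition completes the proof.

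\emph{Main difficulty.} The delicate step is Step 1: one must invoke Lemma~\ref{lem:X_compare_distribution} once to eliminate the $\lambda$-terms and once more to convert $\expect{\vec{X}^{\pm}}{\phi'(h(\vec{X}^{\pm}))}$ into expectations of $F_Y(Y)\phi'(h(\vec{X}))$-type, and then verify that all $\vec{X}$-only contributions cancel so that the bias collapses to a single expectation $\expect{\vec{X},Y}{(\cdots)\phi'(h(\vec{X}))}$ amenable to the bound $|\phi'|\le M$; everything after that is routine bookkeeping. A secondary technical point is checking that the composed class $\{\vec{x}\mapsto\phi(h(\vec{x}))-(h(\vec{x})-\lambda)\phi'(h(\vec{x}))\}$ inherits controlled covering numbers from the two hypothesis classes in the statement.
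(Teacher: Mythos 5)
Your proposal is correct and follows essentially the same route as the paper: the bias bound $|R(h)-R_\linear(h)|\le M\,\mathrm{Err}(w_1,w_2)$ via Lemma~\ref{lem:X_compare_distribution} (the paper's Lemma~\ref{lem:linear-loss-difference}), the same ERM decomposition with $\hat{R}_\linear(\hat{h}_\linear)\le\hat{R}_\linear(h^*)$, and the same pseudo-dimension uniform-convergence bounds on the three empirical averages with a union bound. Even the factor $2M\,\mathrm{Err}$ you absorb into the stated bound matches what appears at the end of the paper's own proof.
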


The proof can be found in Appendix~\ref{sec:proof-of-uniform-taylor-bound}. Note that the conditions of boundedness of $|h(\vec{x})\phi'(h(\vec{x})) - \phi(h(\vec{x}))|,|\phi'(h(\vec{x}))|$ hold for many losses, e.g., $l_2$-loss, when we consider a hypothesis space of bounded functions.

From Theorem~\ref{thm:uniform-taylor-bound}, we can see that we can learn a model with less excess risk by minimizing $\mathrm{Err}(w_1,w_2)$. Note that $\mathrm{Err}(w_1,w_2)$ can be easily minimized since density function $f_Y$ is known or can be estimated from $\mathcal{D}_Y$. In particular, if target $Y$ is uniformly distributed on $[a,b]$, we have $\mathrm{Err}(w_1,w_2) = 0$ by setting $(w_1,w_2) =(b/2,a/2)$. In such a case, $\hat{h}_\linear$ becomes a consistent model, i.e., $R(\hat{h}_\linear) \to R(h^*)$ as $n_\unlabeled \to \infty$ and $n_\rank \to \infty$. The convergence rate is $O(1/\sqrt{n_\unlabeled}+1/\sqrt{n_\rank})$, which is optimal parametric rate for
the empirical risk minimization without additional assumptions when we have enough amount of unlabeled and pairwise comparison data jointly \citep{Mendelson2008}.

One important case where target variable $Y$ distributes uniformly is when the target is  ``quantile value''. For instance, we are to build a screening system for credit cards. Then, what we are interested in is ``how much an applicant is credible in the population?'', which means that we want to predict the quantile value of the ``credit score'' in the marginal distribution. By definition, we know that such a quantile value distributes uniformly, and thus we can have a consistent model by minimizing $\hat{R}_\linear$.

In general cases, however, we may have $\mathrm{Err}(w_1,w_2) > 0$, and $\hat{h}_\linear$ becomes not consistent. Nevertheless, this is inevitable as suggested in the following theorem.
\begin{thm}\label{thm:impossibility}
    There exists a pair of joint distributions $P_{\vec{X},Y}, \tilde{P}_{\vec{X},Y}$ that yields the same marginal distributions of feature $P_{\vec{X}}$ and target $P_Y$, and the same distributions of the pairwise comparison data $P_{\vec{X}^+,\vec{X}^-}$ but have different conditional expectation $\expect{Y | \vec{X}=\vec{x}}{Y}$.
\end{thm}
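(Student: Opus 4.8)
The plan is to build the two joint laws by hand, using a one-dimensional feature. By the same symmetrization argument used to prove Lemma~\ref{lem:X_compare_distribution} (split on $Y\ge Y'$ versus $Y<Y'$ and relabel the two i.i.d.\ copies), the pairwise comparison law $P_{\vec{X}^+,\vec{X}^-}$ is the distribution on $\mathcal{X}\times\mathcal{X}$ with density $2f_{\vec{X}}(\vec{x}^+)f_{\vec{X}}(\vec{x}^-)\,q(\vec{x}^+,\vec{x}^-)$, where $q(\vec{x}^+,\vec{x}^-)=\mathbb{P}(Y\ge Y'\mid \vec{X}=\vec{x}^+,\ \vec{X}'=\vec{x}^-)$ for two independent copies of $(\vec{X},Y)$ (assuming $Y$ atomless, which will hold). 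Hence it suffices to produce two laws on $[0,1]^2$ with the same $f_{\vec{X}}$, the same $f_Y$ and the same function $q(\cdot,\cdot)$, but different $\vec{x}\mapsto \mathbb{E}[Y\mid\vec{X}=\vec{x}]$. I would take $\vec{X}\sim\mathrm{Unif}[0,1]$ under both laws; under $\tilde{P}$ let $Y$ be independent of $\vec{X}$ with a fixed density $f_Y$; under $P$ let the conditional density of $Y$ given $\vec{X}=\vec{x}$ be $f_Y(y)+c(\vec{x})g(y)$, for a scalar function $c$ on $[0,1]$ with $\int_0^1 c=0$ and a fixed profile $g$ with $\int_0^1 g=0$.

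Next I would track how this perturbation acts on the three quantities that must be matched. Since $\int_0^1 c=0$ and $\int_0^1 g=0$, both $f_{\vec{X}}$ and $f_Y$ are unchanged. For the comparison law, expanding $q(\vec{x},\vec{x}')$ with $Y$ atomless yields $q(\vec{x},\vec{x}')=\tfrac12+c(\vec{x})\int_0^1 F_Y g-c(\vec{x}')\int_0^1 F_Y g+c(\vec{x})c(\vec{x}')D$, where $D=\int_0^1\!\int_0^1 \mathbf{1}(y\ge y')g(y)g(y')\,dy\,dy'$; a symmetry argument ($y\leftrightarrow y'$, together with $\int_0^1 g=0$ and nullity of the diagonal) forces $D=0$, so $q\equiv\tfrac12$, equal to the comparison probability under $\tilde{P}$, precisely when $\int_0^1 F_Y(y)g(y)\,dy=0$. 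Finally $\mathbb{E}_P[Y\mid\vec{X}=\vec{x}]-\mathbb{E}_{\tilde{P}}[Y\mid\vec{X}=\vec{x}]=c(\vec{x})\int_0^1 y\,g(y)\,dy$, so the conditional expectations differ as soon as $c\not\equiv0$ and $\int_0^1 y\,g\neq0$.

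The crux — and the step I expect to be the real obstacle — is to choose $f_Y$ and $g$ with $\int_0^1 g=\int_0^1 F_Y g=0$ yet $\int_0^1 y\,g\neq0$, while keeping $f_Y(y)\pm c(\vec{x})g(y)\ge0$. This is also where the theorem's sharpness surfaces: if $Y$ were uniform on an interval then $F_Y$ is affine, the two constraints $\int g=\int F_Y g=0$ would force $\int y\,g=0$, and no such perturbation exists — in agreement with the consistency results of Sections~\ref{sec:linear-approx}--\ref{sec:cdf-bregman}. So I would pick $f_Y$ whose CDF is non-affine on its support, e.g.\ $f_Y(y)=\tfrac12+y$ on $[0,1]$, for which $1$, $y$ and $F_Y(y)$ are linearly independent on $[0,1]$; then a quadratic $g(y)=a+by+y^2$ with $(a,b)$ solving the $2\times2$ linear system $\int_0^1 g=0$, $\int_0^1 F_Y g=0$ automatically has $\int_0^1 y\,g\neq0$, and after rescaling $g$ by a small constant and taking, say, $c(\vec{x})=\vec{x}-\tfrac12$, one checks $|c(\vec{x})g(y)|<f_Y(y)$ on $[0,1]$ so that both conditional densities are genuine. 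Solving the linear system, evaluating $\int_0^1 y\,g$, and verifying these elementary inequalities is routine; taking $\mathcal{X}=\mathcal{Y}=[0,1]$ then delivers the required pair $P_{\vec{X},Y}$, $\tilde{P}_{\vec{X},Y}$.
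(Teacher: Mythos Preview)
Your argument is correct. The derivation of the pairwise-comparison density as $2f_{\vec{X}}(\vec{x}^+)f_{\vec{X}}(\vec{x}^-)q(\vec{x}^+,\vec{x}^-)$ is sound, the expansion of $q$ is right, and the symmetry argument for $D=0$ goes through (writing $G(y)=\int_0^y g$, one has $D=\int G'G=\tfrac12 G(1)^2=0$). The linear system with $f_Y(y)=\tfrac12+y$ has a unique quadratic solution, and one checks $\int_0^1 yg\neq 0$ and $|c(\vec{x})g(y)|<f_Y(y)$ directly, so the construction is complete.

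Your route is genuinely different from the paper's. The paper simply writes down an explicit piecewise-constant pair of densities on $[-1,1]\times[0,4]$ (one of them the independent coupling with $f_Y$ supported on $[0,2]\cup[3,4]$) and verifies by hand that both give $f_{\vec{X}}$, $f_Y$ and $f_{\vec{X}^+,\vec{X}^-}$ uniform while the conditional means differ. That is shorter to check but opaque. Your perturbative construction is more informative: it reduces the existence of a counterexample to the linear-algebraic fact that $1$, $y$ and $F_Y(y)$ are linearly independent whenever $F_Y$ is non-affine, which explains \emph{why} the uniform case is exactly the boundary where consistency is possible (cf.\ Theorems~\ref{thm:uniform-taylor-bound} and~\ref{thm:general-generalization-bound}) and shows that counterexamples exist for \emph{every} non-uniform $P_Y$, not just the particular one in the paper. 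The price is that your verification involves solving a small linear system and checking a positivity bound rather than reading off numbers from a table.
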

Theorem~\ref{thm:impossibility} states that there exists a pair of distributions that cannot be distinguished from available data. Considering that $h^*(\vec{x}) = \expect{Y|\vec{X} = \vec{x}}{Y}$ when hypothesis space $\mathcal{H}$ is rich enough \citep{Frigyik2008}, this theorem implies that we cannot always obtain a consistent model. Still, by tuning weights $(w_1,w_2)$, we can obtain a  model competitive with the consistent one. In Section~\ref{sec:experiments}, we show that $h_\linear$ empirically exhibits a similar accuracy to a model learned from ordinary coupled data.

\section{Empirical Risk Minimization by Target Transformation} \label{sec:cdf-bregman}

In this section, we introduce another approach to uncoupled regression with pairwise comparison data, called the \textit{target transformation} (TT) approach. Whereas the RA approach minimizes the approximation of the original risk, the TT approach transforms the target variable so that it marginally distributes uniformly, and it minimizes an unbiased estimator of the risk defined based on the transformed variable.

Although there are several ways to map $Y$ to a uniformly distributed random variable, one natural candidate  would be CDF $F_Y(Y)$, which leads to considering the following risk:
\begin{align}
    R_{\CDF}(h) &= \expect{\vec{X},Y}{d_\phi(F_Y(Y),F_Y(h(\vec{X}))}. \label{eq:bregman-CDF-decomposition}
\end{align}
Since $F_Y(Y)$ distributes uniformly on $[0,1]$ by definition, we can construct the following unbiased estimator of $R_\CDF$ below from the same discussion as in the previous section.
\begin{align*}
    \hat{R}_{\CDF}(h; \lambda) &= \mathfrak{C} - \frac{1}{n_\unlabeled} \sum_{\vec{x}_i \in \mathcal{D}_\unlabeled} \big((\lambda -F_Y(h(\vec{x}_i)))\phi'(F_Y(h(\vec{x}_i))) + \phi(F_Y(h(\vec{x}_i)))\big) \notag\\
    &\quad~~~~~  - \frac{1}{n_\rank}\sum_{(\vec{x}^+_i, \vec{x}^-_i)\in \mathcal{D}_\rank} \left(\frac{1-\lambda}{2}\phi'(F_Y(h(\vec{x}_i^+))) - \frac{\lambda}{2} \phi'(F_Y(h(\vec{x}_i^-)))\right),
\end{align*}
where $\lambda$ is a hyper-parameter to be tuned. The TT approach minimizes $\hat{R}_\CDF$ to learn a model. However, the learned model is, again, not always consistent in terms of original risk $R$. This is because, in rich enough hypothesis space $\mathcal{H}$, the minimizer $h_{\CDF} = F_Y^{-1}\left(\expect{Y|\vec{X} = \vec{x}}{F_Y(Y)}\right)$ of \eqref{eq:bregman-CDF-decomposition} is different from $\expect{Y|\vec{X} = \vec{x}}{Y}$, the minimizer of \eqref{eq:expected_loss}, unless target $Y$ distributes uniformly. Hence, for non-uniform target, we cannot always obtain a consistent model. However, we can still derive an estimation error bound if $h_\CDF \in \mathcal{H}$ and target variable $Y$ is generated as
\begin{align}
    Y = h_\mathrm{true}(\vec{X}) + \varepsilon, \label{eq:target-generation}
\end{align}
where $h_\mathrm{true}: \mathcal{X}\to\mathcal{Y}$ is the true target function and $\varepsilon$ is a zero-mean noise variable bounded in $[-\sigma,\sigma]$ for some constant $\sigma$. 
\begin{thm}\label{thm:general-generalization-bound}
Assume that target variable $Y$ is generated by \eqref{eq:target-generation} and $h_{\CDF} \in \mathcal{H}$.  If the pseudo-dimensions of $\{\vec{x} \to \phi'(F_Y(h(\vec{x})))|h \in \mathcal{H}\}, \{\vec{x} \to \phi'(F_Y(h(\vec{x})))|h \in \mathcal{H}\}$ are finite and there exist constants $P>p>0$ such that $p \leq f_Y(y) \leq P$ for all $y \in \mathcal{Y}$, we have
\begin{align*}
    R(\hat{h}_\CDF) \leq R(h_\mathrm{true}) + \left(\frac{P}{p}\sigma\right)^2 + O\left(\sqrt{\frac{\log 1/\delta}{n_\unlabeled}}\right) + O\left(\sqrt{\frac{\log 1/\delta}{n_\rank}}\right)
\end{align*}
with probability $1-\delta$ for $\phi(x) = x^2$, where $\hat{h}_\CDF$ is the minimizer of risk  $\hat{R}_\CDF$ in $\mathcal{H}$.
\end{thm}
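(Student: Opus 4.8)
The plan is to split the excess risk $R(\hat h_\CDF) - R(h_\mathrm{true})$ into an \emph{estimation} part (distance from $\hat h_\CDF$ to the population minimizer $h_\CDF$ of $R_\CDF$) and an \emph{approximation} part (distance from $h_\CDF$ to $h_\mathrm{true}$), to bound each in the $L^2(P_{\vec X})$ norm $\|\cdot\|$, and to recombine. The starting point is that, because $\phi(x)=x^2$ and the noise is (conditionally) zero-mean, so that $\mathbb{E}[Y\mid\vec X=\vec x]=h_\mathrm{true}(\vec x)$ and $\mathbb{E}[F_Y(Y)\mid\vec X=\vec x]=F_Y(h_\CDF(\vec x))$, both risks admit a bias-variance identity:
\begin{align*}
R(h) &= R(h_\mathrm{true}) + \|h - h_\mathrm{true}\|^2,\\
R_\CDF(h) &= R_\CDF(h_\CDF) + \|F_Y\circ h - F_Y\circ h_\CDF\|^2.
\end{align*}
Hence $R(\hat h_\CDF) - R(h_\mathrm{true}) = \|\hat h_\CDF - h_\mathrm{true}\|^2 \le \big(\|\hat h_\CDF - h_\CDF\| + \|h_\CDF - h_\mathrm{true}\|\big)^2$, and it remains to control the two summands.

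For the estimation term I would mimic the proof of Theorem~\ref{thm:uniform-taylor-bound}. Since $\hat R_\CDF(\cdot;\lambda)$ is an unbiased estimator of $R_\CDF$, its two empirical averages run over integrands that, for $\phi(x)=x^2$, are fixed bounded (polynomial) transformations of $F_Y(h(\vec x))\in[0,1]$; together with the finite-pseudo-dimension hypothesis this gives, via a union bound, $\sup_{h\in\mathcal H}\lvert\hat R_\CDF(h;\lambda)-R_\CDF(h)\rvert \le \rho$ with probability $1-\delta$, where $\rho = O(\sqrt{\log(1/\delta)/n_\unlabeled}) + O(\sqrt{\log(1/\delta)/n_\rank})$. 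As $h_\CDF\in\mathcal H$ and $\hat h_\CDF$ minimizes $\hat R_\CDF(\cdot;\lambda)$ over $\mathcal H$, the usual ERM chain yields $R_\CDF(\hat h_\CDF) - R_\CDF(h_\CDF) \le 2\rho$, i.e.\ $\|F_Y\circ\hat h_\CDF - F_Y\circ h_\CDF\|^2 \le 2\rho$ by the identity above. Finally $f_Y\ge p$ forces $\lvert F_Y(a)-F_Y(b)\rvert\ge p\lvert a-b\rvert$, so $\|\hat h_\CDF - h_\CDF\|^2 \le 2\rho/p^2 = O(\sqrt{\log(1/\delta)/n_\unlabeled}) + O(\sqrt{\log(1/\delta)/n_\rank})$.

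For the approximation term, write $F_Y(h_\CDF(\vec x)) = \mathbb{E}[F_Y(h_\mathrm{true}(\vec x)+\varepsilon)\mid\vec X=\vec x]$. By the mean value theorem $F_Y(h_\mathrm{true}(\vec x)+\varepsilon) - F_Y(h_\mathrm{true}(\vec x)) = \varepsilon f_Y(\xi)$ for an intermediate $\xi$; subtracting the constant $(P+p)/2$ is free because $\mathbb{E}[\varepsilon\mid\vec X]=0$, and then $\lvert f_Y(\xi)-(P+p)/2\rvert\le (P-p)/2$ with $\lvert\varepsilon\rvert\le\sigma$ gives $\lvert F_Y(h_\CDF(\vec x)) - F_Y(h_\mathrm{true}(\vec x))\rvert \le \tfrac12(P-p)\sigma$. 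Applying $F_Y^{-1}$, which is $(1/p)$-Lipschitz because $f_Y\ge p$, gives $\|h_\CDF - h_\mathrm{true}\|_\infty \le \tfrac{(P-p)\sigma}{2p} \le \tfrac12\cdot\tfrac{P}{p}\sigma$, hence $\|h_\CDF - h_\mathrm{true}\|^2 \le \tfrac14\big(\tfrac{P}{p}\sigma\big)^2$. Plugging both bounds into the recombination inequality, expanding the square, and spending the factor-$\tfrac14$ slack on the cross term via a weighted AM--GM step turns the right-hand side into $\big(\tfrac{P}{p}\sigma\big)^2 + O(\rho)$, which is exactly the claimed inequality.

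The main obstacle I anticipate is precisely this recombination: a naive triangle-inequality expansion produces an estimation/approximation cross term of order $\|h_\CDF - h_\mathrm{true}\|\cdot\|\hat h_\CDF - h_\CDF\| = \Theta(\sigma\,\rho^{1/2})$, i.e.\ only $n^{-1/4}$, which would be larger than the advertised $n^{-1/2}$ rate. Obtaining the clean statement therefore requires (i) the sharper $\tfrac12$-factor in the $\|h_\CDF-h_\mathrm{true}\|_\infty$ bound, which crucially exploits the zero-mean noise (without it one only gets $P\sigma/p$, leaving no slack), and (ii) a carefully weighted AM--GM so that the cross term is absorbed into the two displayed terms rather than dominating them. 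A minor secondary check is that the boundedness and pseudo-dimension hypotheses on $\mathcal H$ genuinely transfer to the composed class $\{F_Y\circ h : h\in\mathcal H\}$ and to the integrands of $\hat R_\CDF$, so that the Theorem~\ref{thm:uniform-taylor-bound}-style uniform bound applies without change.
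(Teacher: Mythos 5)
Your proposal is correct, and its skeleton is the same as the paper's: a bias--variance identity for both $R$ and $R_\CDF$ under the (conditionally) zero-mean noise model, a bound on $\|h_\CDF-h_\mathrm{true}\|_\infty$ via the $P$-Lipschitzness of $F_Y$ and $1/p$-Lipschitzness of $F_Y^{-1}$, an estimation bound $\mathbb{E}_{\vec{X}}[(\hat{h}_\CDF-h_\CDF)^2]\le \frac{1}{p^2}\bigl(R_\CDF(\hat{h}_\CDF)-R_\CDF(h_\CDF)\bigr)=O(\sqrt{\log(1/\delta)/n_\unlabeled})+O(\sqrt{\log(1/\delta)/n_\rank})$ obtained by the Theorem~\ref{thm:uniform-taylor-bound}-style uniform deviation argument (valid because $F_Y(h(\vec{x}))\in[0,1]$ keeps the integrands bounded), and a final recombination. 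Where you diverge is only in that last recombination, and the divergence is worth noting: the paper avoids your feared cross term entirely by writing $(a+b)^2\le 2a^2+2b^2$ with the crude approximation bound $\|h_\CDF-h_\mathrm{true}\|_\infty\le \sigma P/p$ (it does not exploit $\mathbb{E}[\varepsilon]=0$ beyond the bias--variance step), so its proof actually delivers the constant $2(\sigma P/p)^2$ rather than the $(\sigma P/p)^2$ displayed in the theorem. Your sharper mean-value/centering argument giving $\|h_\CDF-h_\mathrm{true}\|_\infty\le (P-p)\sigma/(2p)$, combined with the weighted AM--GM $(a+b)^2\le \tfrac43 a^2+4b^2$, absorbs the cross term at rate $O(n^{-1/2})$ and recovers exactly the stated constant; so your extra care buys a statement that matches the theorem as written, at the cost of a slightly longer argument than the paper's two-line $2a^2+2b^2$ shortcut.
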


The proof can be found in Appendix~\ref{sec:proof-of-general-generalization-bound}. From Theorem~\ref{thm:general-generalization-bound}, we can see that $\hat{h}_\CDF$ is not necessarily consistent. Again, this is inevitable due to the same reason as the RA approach. By comparing Theorems~\ref{thm:uniform-taylor-bound} and \ref{thm:general-generalization-bound}, we can see that the TT approach is more advantageous than the RA approach when the target contains less noise. In section~\ref{sec:experiments}, we empirically compare these approaches and show that which approach is more suitable differs from case to case.

\section{Experiments} \label{sec:experiments}
In this section, we present the empirical performances of proposed methods in the experiments based on synthetic data and benchmark data. We show that our proposed methods outperform the naive method described in \eqref{eq:naive-regression-function} and have a similar performance to a model learned from ordinary supervised learning with labeled data. All codes are available on Github.

Before presenting the results, we describe the detailed procedure of experiments. In all experiments, we consider $l_2$-loss $l(z,t) = (z-t)^2$, which corresponds to setting $\phi(x) = x^2$ in Bregman divergence $d_\phi(t,z)$. The performance is also evaluated by the mean suqared error (MSE) in the held-out test data. We repeat each experiments for 100 times and report the mean and the standard deviation. We employ hypothesis space with linear functions $\mathcal{H} = \{h(\vec{x}) = \vec{\theta}^\top \vec{x}~|~\vec{\theta}\in\mathbb{R}^d\}$. The procedure of hyper-parameter tuning in $R_\linear$ and $R_\CDF$ can be found in Appendix~\ref{sec:experiments-detals}.

We introduce two types of baseline methods. One is a naive application of the ranking methods described in \eqref{eq:naive-regression-function}, in which we use SVMRank \citep{Herbrich2000} as a ranking method. To have a fair comparison, we use the linear kernel in SVMRank. The other is an ordinary supervised linear regression (LR), in which we fit a linear model using the true labels in unlabeled data $\mathcal{D}_\unlabeled$. Note that LR does not use pairwise comparison data $\mathcal{D}_\rank$.


\paragraph{Result for Synthetic Data.} First, we show the result for the synthetic data, in which we know the true marginal $P_Y$. We sample $5$-dimensional unlabeled data $\mathcal{D}_\unlabeled$ from normal distribution $\mathcal{N}(\vec{0}, I_d)$, where $I_d$ is the identity matrix. Then, we sample true unknown parameter $\vec{\theta}$ such that $\|\vec{\theta}\|_2 = 1$ uniformly at random. Target $Y$ is generated as $Y = \vec{\theta}^\top\vec{X} + \varepsilon$, where $\varepsilon$ is a noise following $\mathcal{N}(0,0.1)$. Consequently, $P_Y$ corresponds to $\mathcal{N}(0, \sqrt{1.01})$, which is utilized in proposed methods and the ranking baseline. The pairwise comparison data is generated by \eqref{eq:def_X_compare}. We first sample two features $\vec{X},\vec{X}'$ from $\mathcal{N}(\vec{0}, I_d)$, and then, compare them based on the target value $Y,Y'$ calculated by $Y = \vec{\theta}^\top\vec{X} + \varepsilon$. We fix $n_\unlabeled$ to 100,000 and alter $n_\rank$ from 20 to 10,240 to see the change of performance with respect to the size of pairwise comparison data.

\begin{figure}[t]
    \centering
    \begin{tabular}{cc}
      \begin{minipage}[t]{0.45\hsize}
        \centering
        \includegraphics[width=\columnwidth]{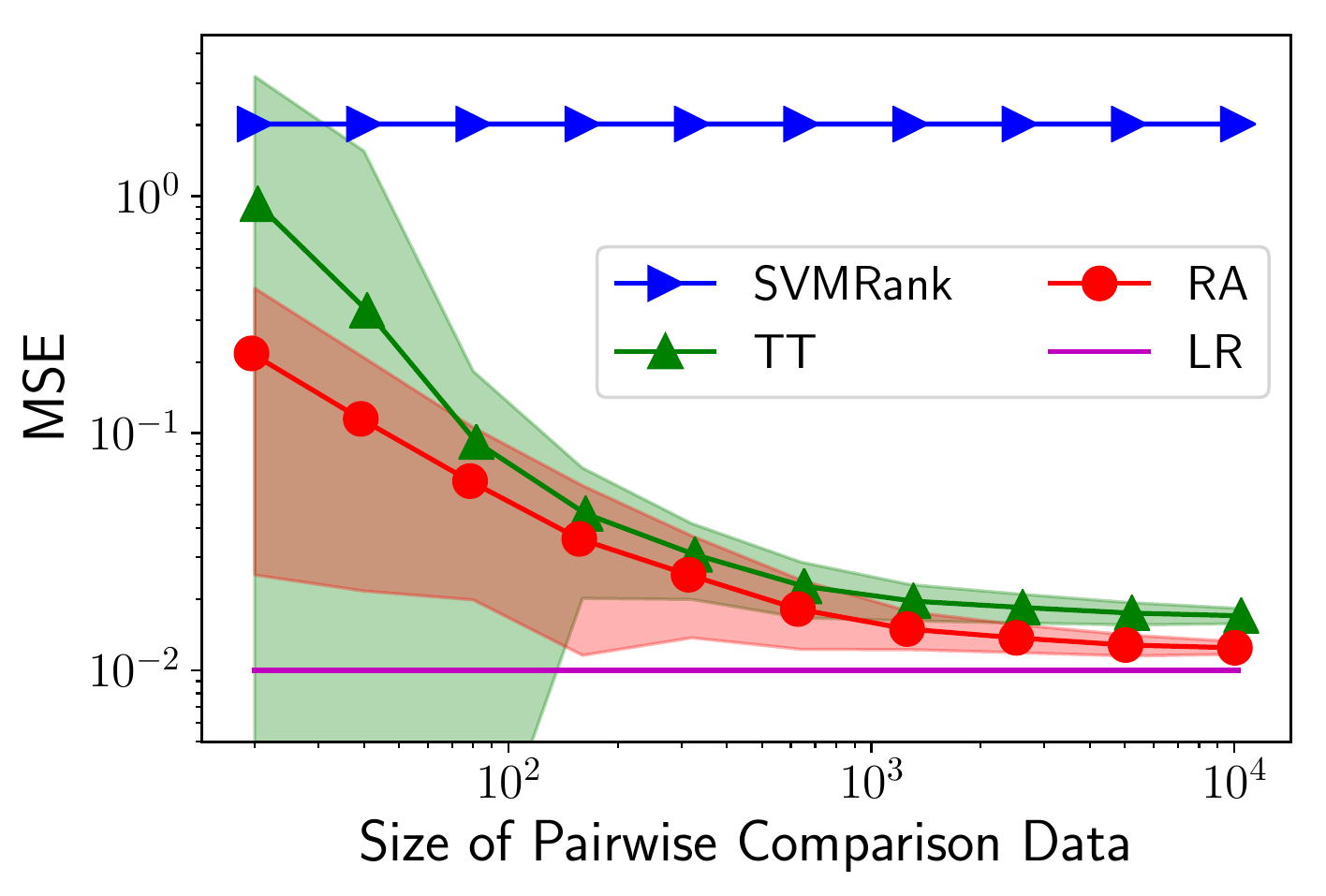}
        \vspace{-20pt}
        \caption{MSE for Synthetic Data}
        \label{fig:normal_dist}
      \end{minipage} &
      \begin{minipage}[t]{0.45\hsize}
        \centering
        \includegraphics[width=\columnwidth]{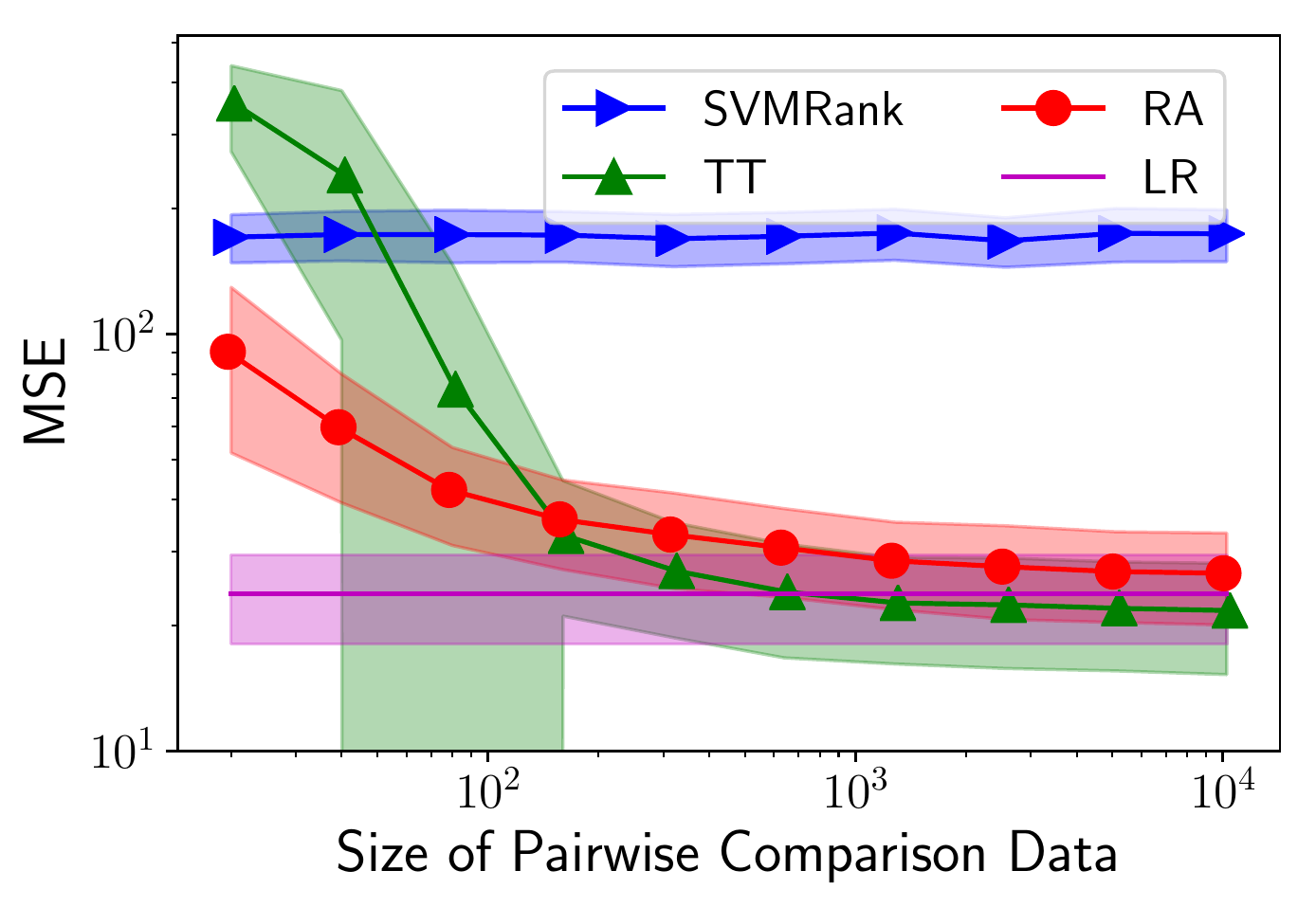}
        \vspace{-20pt}
        \caption{MSE for {\tt housing} Dataset}
        \label{fig:boston}
      \end{minipage}
    \end{tabular}
\end{figure}

The result is presented in Figure~\ref{fig:normal_dist}. From this figure, we can see that with sufficient pairwise comparison data, the performances of our methods are significantly better than SVMRank baseline and close to LR. This is astonishing since LR uses the true label of $\mathcal{D}_\unlabeled$, while our methods do not.

Moreover, we can see that the TT approach outperforms the RA approach with sufficient pairwise
comparison data.
This observation can be understood from the estimation error bound in Theorem~\ref{thm:uniform-taylor-bound}, where the term $\mathrm{Err}(w_1,w_2)$ becomes dominant when sufficient data is provided.
This term $\mathrm{Err}(w_1,w_2)$ becomes large in this synthetic data since $Y$ is not bounded.
Hence, the guarantee of the RA approach becomes weaker than the TT approach when $n_\rank$ is large 
enough, which results in the inferior empirical performance of the RA approach.

Meanwhile, when the size of pairwise comparison data is small, the TT approach is unstable and worse than the RA approach. This is because we learn the quantile value when we minimize $R_\CDF$, and this can be severely inaccurate when the size of pairwise comparison data is small. On the other hand, $R_\linear$ directly minimizes the approximation of true risk $R$, which is less sensitive to small $\mathcal{D}_\rank$.


\paragraph{Result for Benchmark Datasets.} We conducted the experiments for the benchmark datasets as well, in which we do not know true marginal $P_Y$. The details of benchmark datasets can be found in Appendix~\ref{sec:experiments-detals}.  We use the original features as unlabeled data $\mathcal{D}_\unlabeled$. Density function $f_Y$ is estimated from target values in the dataset by kernel density estimation \citep{Parzen1962} with Gaussian kernel. Here, the bandwidth of Gaussian kernel is determined by the cross-validation. The pairwise comparison data is constructed by comparing the true target values of two data points uniformly sampled from $\mathcal{D}_\unlabeled$.

\begin{table}[t]
\caption{MSE for benchmark datasets when $n_\rank$ is 5,000. The bold face means the outstanding method in uncoupled regression methods (SVMRank, RA and TT) chosen by Welch t-test with the significance level $5\%$. Note that LR does not solve uncoupled regression since it uses labels in $\mathcal{D}_\unlabeled$.}
\label{table:sufficient_res}
\centering
\begin{tabular}[tbh]{@{}c@{}cccc@{}} 
\toprule
&Supervised Regression&\multicolumn{3}{c}{Uncoupled Regression}\\
\cmidrule(l{.75em}r{.75em}){2-2}\cmidrule(l{.75em}r{.75em}){3-5}
Dataset  & LR & SVMRank & RA& TT  \\
\midrule
\texttt{ housing }&24.5(5.0)&110.3(29.5)&29.5(6.9)&\textbf{22.5(6.2)}\\
\texttt{ diabetes }&3041.9(219.8)&8575.9(883.1)&\textbf{3087.3(256.3)}&\textbf{3127.3(278.8)}\\
\texttt{ airfoil }&23.3(2.2)&62.1(7.6)&23.7(2.0)&\textbf{22.7(2.2)}\\
\texttt{ concrete }&109.5(13.3)&322.9(45.8)&\textbf{111.7(13.2)}&139.1(17.9)\\
\texttt{ powerplant }&20.6(0.9)&372.2(34.8)&\textbf{21.8(1.1)}&\textbf{22.0(1.0)}\\
\texttt{ mpg }&12.1(2.04)&125(15.1)&12.8(2.16)&\textbf{10.3(2.08)}\\
\texttt{ redwine }&0.412(0.0361)&1.28(0.112)&\textbf{0.442(0.0473)}&0.466(0.0412)\\
\texttt{ whitewine }&0.574(0.0325)&1.58(0.0691)&\textbf{0.597(0.0382)}&0.644(0.0414)\\
\texttt{ abalone }&5.05(0.375)&20.9(1.44)&\textbf{5.26(0.372)}&5.54(0.424)\\
\bottomrule
\end{tabular}
\vskip-10pt
\end{table}

Figure~\ref{fig:boston} shows the performance of each method with respect to the size of pairwise comparison data for {\tt housing} dataset. Although the TT approach performs unstably when $n_\rank$ is small, proposed methods significantly outperform SVMRank and approaches to LR. This fact suggests that the estimation error in $f_Y$ has little impact on the performance. The results for various datasets when $n_\rank$ is 5,000 are presented in Table~\ref{table:sufficient_res}, in which both proposed methods show the promising performances. Note that the approach with less MSE differs by each dataset, which means that we cannot easily judge which approach is better.

\section{Conclusions} \label{sec:conclusion}
In this paper, we proposed novel methods to deal with uncoupled regression by utilizing pairwise comparison data. We introduced two methods, the RA approach and the TT approach, for the problem. The RA approach is to approximate the expected Bregman divergence by the linear combination of expectations of given data, and the TT approach is to learn a model for quantile values and uses the inverse of the CDF to predict the target. We derived estimation error bounds for each method and showed that the learned model is consistent when the target variable distributes uniformly. Furthermore, the empirical evaluations based on both synthetic data and benchmark datasets suggested the competence of our methods. The empirical result also indicated the instability of the TT approach when the size of pairwise comparison data is small, and we may need some regularization scheme to prevent it, which is left for future work.

\subsubsection*{Acknowledge}
LX utilized the facility provided by Masason Foundation. MS was supported by JST CREST Grant Number JPMJCR18A2.

\bibliography{reference}
\newpage
\appendix
\section{Experiments Details} \label{sec:experiments-detals}
In this appendix, we explain the detailed setting of experiments. First, we describe the procedure of the hyper-parameter tuning during the experiments. Then, we provide the detail information of benchmark datasets.

\subsection{Procedure of hyper-parameter tuning}
To construct risk $\hat{R}_\linear$, we need to tune $\lambda, w_1, w_2$, which is done by minimizing empirically approximated $\mathrm{Err}(w_1,w_2)$ defined in \eqref{eq:err-def}. Let $\overline{y},\underline{y}$ be the $0.99$-quantile and $0.01$-quantile of $P_Y$, respectively. Note that we can calculate these quantities since we have access to $f_Y$. Then, we define $\{y^{(i)}\}_{i=1}^{n_{\mathrm{split}}+1}$ as $y_i = \underline{y} + (i-1)/n_{\mathrm{split}}(\overline{y}-\underline{y})$, by which $\mathrm{Err}(w_1, w_2)$ is approximated as 
\begin{align*}
    \mathrm{Err}(w_1, w_2) \simeq \sum_{i=1}^{n_{\mathrm{split}}+1} f_Y(y_i) |y_i - w_1F_Y(y_i) - w_2(1-F_Y(y_i))|.
\end{align*}
We employ $w_1,w_2$ that minimize the empirical approximation above with $n_{\mathrm{split}}=1000$ and fix $\lambda$ to be $(w_1+w_2)/2$ in all cases.

We also use approximation in $R^\lambda_\CDF$ in order to reduce the computational time. Instead of calculating $F_Y(h(\vec{x}))$, we use $\sigma(h(\vec{x}))$, where $\sigma$ is logistic function $\sigma(x) = 1/(1+\exp(-x))$. We fix $\lambda = 1/2$ for this risk, and what we have minimized during the experiments is
\begin{align*}
    R_{\text{TT--emp}}(h) &= \mathfrak{C} - \frac{1}{n_\unlabeled} \sum_{\vec{x}_i \in \mathcal{D}_\unlabeled} \left(\frac12 -\sigma(h(\vec{x}_i))\right)\phi'(\sigma(h(\vec{x}_i))) + \phi(\sigma(h(\vec{x}_i))) \\
    &\quad~~~~~  - \frac{1}{n_\rank}\sum_{(\vec{x}^+_i, \vec{x}^-_i)\in \mathcal{D}_\rank} \frac{1}{4}\phi'(\sigma(h(\vec{x}_i^+))) - \frac{1}{4} \phi'(\sigma(h(\vec{x}_i^-))).
\end{align*}
After obtaining the minimizer $\tilde{h}_\CDF$ of $\tilde{R}_\CDF$, we predict the target by $F_Y^{-1}(\sigma(\tilde{h}_\CDF))$.

\subsection{Benchmark dataset details}
We use eight benchmark datasets from UCI repository \citep{Dua2019} and one (\texttt{diabetes}) from \citet{Efron04}. The details of datasets can be found in Table~\ref{table:datasets}. As preprocessing, we excluded all instances contains missing value, and we encoded categorical feature in \texttt{abalone} as one-hot vector.

\begin{table}[t]
\caption{Information of benchmark datasets.}
\label{table:datasets}
\centering
\begin{tabular}[tbh]{@{}cccc@{}} 
\toprule
Dataset  & Datasize & $d$ & Source \\
\midrule
\texttt{ housing }&404&13&UCI Repository\\
\texttt{ diabetes }&353&10&\citep{Efron04}\\
\texttt{ airfoil }&1202&5&UCI Repository\\
\texttt{ concrete }&824&8&UCI Repository\\
\texttt{ powerplant }&7654&4&UCI Repository\\
\texttt{ mpg }&313&7&UCI Repository\\
\texttt{ redwine }&1279&11&UCI Repository\\
\texttt{ whitewine }&3918&11&UCI Repository\\
\texttt{ abalone }&3341&10&UCI Repository\\
\bottomrule
\end{tabular}
\vskip-10pt
\end{table}
\section{Estimating Density Function and Cumulative Distribution Function} \label{sec:est-prob}
In this section, we discuss the case where the true probability density function $f_Y$ is not given. In such a case, we need a slight modification of proposed approaches since we have to estimate $f_Y$ from the set of target values $\mathcal{D}_Y = \{y_i\}_{i=1}^{n_Y}$, where $n_Y$ is the size of $\mathcal{D}_Y$. We first introduce modification of the RA approach and derive a estimation error bound for it. Then, we discuss the same for the TT approach as well.

\subsection{Modification of the risk approximation approach}

Although $\hat{R}_\linear$ does not depend on $f_Y$ or $F_Y$, we need the information of $P_Y$ when tuning weights $w_1,w_2$, which is done by the minimization of $\mathrm{Err}$ defined in \eqref{eq:err-def}. Since, $\mathrm{Err}$ can not be directly calculated without $f_Y$ and $F_Y$, we propose another quantity $\widehat{\mathrm{Err}}$ below, which substitute expectation over $P_Y$ and CDF function $F_Y$ to empirical mean and the empirical CDF.  
\begin{align*}
    \widehat{\mathrm{Err}}(w_1,w_2) = \frac{1}{n_Y} \sum_{i=1}^{n_Y} |y_i - w_1\hat{F}_Y(y_i) - w_2(1-\hat{F}_Y(y_i))|,
\end{align*}
where $\hat{F}_Y$ is the empirical CDF defined as
\begin{align*}
    \hat{F}_Y(y) = \frac{1}{n_Y} \sum_{i=1}^{n_Y} \indi{y_i \leq y}.
\end{align*}
Note that $\widehat{\mathrm{Err}}$ can be minimized given $\mathcal{D}_Y$. To show the validity of the method, we establish an estimation error bound involving $\widehat{\mathrm{Err}}$ as follows.

\begin{thm}\label{thm:est_prob_RA}
Let $\mathcal{Y}$ be bounded in $\mathcal{Y} \subseteq [-L,L]$. Then, for all $w_1,w_2 \in [-L,L]$, we have
\begin{align*}
    |\mathrm{Err}(w_1,w_2)-\widehat{\mathrm{Err}}(w_1,w_2)| \leq O\left(\sqrt{\frac{\log \delta}{n_Y}}\right)
\end{align*}
with probability $1-2\delta$.
\end{thm}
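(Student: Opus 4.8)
The plan is to bound the difference $|\mathrm{Err}(w_1,w_2) - \widehat{\mathrm{Err}}(w_1,w_2)|$ by splitting it into two pieces: one controlling the gap between the population expectation over $P_Y$ and the empirical mean over $\mathcal{D}_Y$, and one controlling the gap between the true CDF $F_Y$ and the empirical CDF $\hat F_Y$. Write $g_{w_1,w_2}(y) = |y - w_1 F_Y(y) - w_2(1-F_Y(y))|$ and $\hat g_{w_1,w_2}(y) = |y - w_1 \hat F_Y(y) - w_2(1-\hat F_Y(y))|$. Then
\begin{align*}
    |\mathrm{Err} - \widehat{\mathrm{Err}}| \leq \left|\expect{Y}{g_{w_1,w_2}(Y)} - \frac1{n_Y}\sum_i g_{w_1,w_2}(y_i)\right| + \frac1{n_Y}\sum_i \left|g_{w_1,w_2}(y_i) - \hat g_{w_1,w_2}(y_i)\right|.
\end{align*}

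For the second term, the reverse triangle inequality gives $|g_{w_1,w_2}(y) - \hat g_{w_1,w_2}(y)| \leq |w_1 - w_2|\,|F_Y(y) - \hat F_Y(y)| \leq 2L\,\|F_Y - \hat F_Y\|_\infty$ since $w_1, w_2 \in [-L,L]$. By the Dvoretzky–Kiefer–Wolfowitz inequality, $\|F_Y - \hat F_Y\|_\infty \leq \sqrt{\log(1/\delta)/(2 n_Y)}$ with probability at least $1 - \delta$ (actually $1-2\delta$ for the two-sided version, or one can absorb the constant), which handles this term uniformly in $(w_1,w_2)$.

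For the first term, I would use a uniform concentration argument over the class $\mathcal{G} = \{g_{w_1,w_2} : w_1, w_2 \in [-L,L]\}$. Each $g_{w_1,w_2}$ is bounded: since $|y| \leq L$ and $|w_1 F_Y(y) + w_2(1-F_Y(y))| \leq L$, we have $0 \leq g_{w_1,w_2}(y) \leq 2L$. The family is parametrized by two real parameters and is Lipschitz in $(w_1,w_2)$ (with constant $1$ in each coordinate, uniformly in $y$), so its pseudo-dimension — or covering number — is controlled; a standard uniform-deviation bound via Rademacher complexity or a covering-number / Hoeffding-plus-union argument over an $\varepsilon$-net of $[-L,L]^2$ then yields $\sup_{w_1,w_2}|\expect{Y}{g_{w_1,w_2}} - \frac1{n_Y}\sum_i g_{w_1,w_2}(y_i)| = O(\sqrt{\log(1/\delta)/n_Y})$ with probability $1-\delta$. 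Combining the two bounds via a union bound gives the claim with probability $1 - 2\delta$.

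The main obstacle — though a mild one — is making the first-term bound genuinely uniform over the continuum of weights $(w_1,w_2)$ rather than pointwise, since $\widehat{\mathrm{Err}}$ is minimized over these parameters and the data-dependent minimizer must be controlled. This is resolved by the Lipschitz-in-parameters observation plus a discretization argument (the net has $O((L/\varepsilon)^2)$ points, contributing only logarithmically), so no genuinely hard estimate is needed; the DKW inequality does the analogous work for the empirical-CDF term without any net at all. The boundedness hypothesis $\mathcal{Y} \subseteq [-L,L]$ is exactly what makes all the relevant quantities bounded and the Lipschitz constants finite.
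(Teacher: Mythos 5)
Your proposal is correct and follows essentially the same route as the paper: a uniform deviation bound over the class of functions $y \mapsto |y - w_1F_Y(y) - w_2(1-F_Y(y))|$ indexed by bounded weights (the paper invokes a pseudo-dimension/Rademacher-type bound from Mohri et al., you use a covering-net or Rademacher argument), plus the Dvoretzky--Kiefer--Wolfowitz inequality to swap $F_Y$ for $\hat F_Y$, combined by a union bound to get probability $1-2\delta$. Your explicit Lipschitz computation $|g_{w_1,w_2}(y)-\hat g_{w_1,w_2}(y)| \leq |w_1-w_2|\,\|F_Y-\hat F_Y\|_\infty$ is a slightly cleaner way to state the DKW step than the paper's, but it is the same argument.
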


\begin{proof}
Since the weights are bounded, from \citet[Thm. 10.3]{Mohri2012}, we have
\begin{align*}
    \mathrm{Err}(w_1,w_2) \leq \frac{1}{n_Y} \sum_{i=1}^{n_Y} |y_i - w_1F_Y(y_i) - w_2(1-F_Y(y_i))| +O\left(\sqrt{\frac{\log 1/\delta}{m}}\right),
\end{align*}
with probability $1-\delta$. Furthermore, from Dvoretzky-Kiefer-Wolfowitz Inequality \citep{Massart1990}, we have
\begin{align}
    \|F_Y(y) - \hat{F}_Y(y)\|_\infty \leq \sqrt{\frac{\log (2/\delta)}{2n_Y}} \label{eq:dkw}
\end{align}
with probability $1-\delta$, which yields
\begin{align*}
    \frac{1}{n_Y} \sum_{i=1}^{n_Y} |y_i - w_1F_Y(y_i) - w_2(1-F_Y(y_i))| \leq \widehat{\mathrm{Err}} +O\left(\sqrt{\frac{\log 1/\delta}{m}}\right).
\end{align*}
Therefore, from the union bound, we have
\begin{align*}
    |\mathrm{Err}(w_1,w_2)-\widehat{\mathrm{Err}}(w_1,w_2)| \leq O\left(\sqrt{\frac{\log \delta}{n_Y}}\right)
\end{align*}
with probability $1-2\delta$.
\end{proof}

From Theorems~\ref{thm:uniform-taylor-bound} and \ref{thm:est_prob_RA}, we have
\begin{align*}
    R(\hat{h}_\linear) \leq R(h^*) + O\left(\sqrt{\frac{\log 1/\delta}{n_\unlabeled}}\right)
    + O\left(\sqrt{\frac{\log 1/\delta}{n_\rank}}\right)
    + O\left(\sqrt{\frac{\log 1/\delta}{n_Y}}\right) + M\widehat{\mathrm{Err}}(w_1,w_2),
\end{align*}
with probability $1-5\delta$ under the conditions given in these theorems.

\subsection{Modification on the target transformation approach}

On the other hand, we have $F_Y$ in risk $\hat{R}_\CDF$. Let $\tilde R_\CDF$ be the risk which substitute $F_Y$ in $R_\CDF$ to empirical CDF, defined as
\begin{align*}
    \tilde{R}_{\CDF}(h; \lambda) &= \mathfrak{C} - \frac{1}{n_\unlabeled} \sum_{\vec{x}_i \in \mathcal{D}_\unlabeled} \left((\lambda -\hat{F}_Y(h(\vec{x}_i)))\phi'(\hat{F}_Y(h(\vec{x}_i))) + \phi(\hat{F}_Y(h(\vec{x}_i)))\right) \notag\\
    &\quad~~~~~  - \frac{1}{n_\rank}\sum_{(\vec{x}^+_i, \vec{x}^-_i)\in \mathcal{D}_\rank} \left(\frac{1-\lambda}{2}\phi'(\hat{F}_Y(h(\vec{x}_i^+))) - \frac{\lambda}{2} \phi'(\hat{F}_Y(h(\vec{x}_i^-))),\right).
\end{align*}
Using \eqref{eq:dkw}, we have
\begin{align*}
    |\hat{R}_{\CDF}(h) - \tilde{R}_{\CDF}(h)| \leq O\left(\sqrt{\frac{\log 1/\delta}{n_Y}}\right)
\end{align*}
for all $h\in\mathcal{H}$ with probability $1-\delta$. Let $\tilde{h}_\CDF$ be the minimizer of $\tilde{R}_\CDF$ in hypothesis space $\mathcal{H}$. Then, under the condition given in Theorem~\ref{thm:general-generalization-bound}, we have
\begin{align*}
    R_\CDF(\tilde{h}_\CDF) \leq R_\CDF(h_\CDF) + O\left(\sqrt{\frac{\log 1/\delta}{n_Y}}\right) + O\left(\sqrt{\frac{\log 1/\delta}{n_\rank}}\right) + O\left(\sqrt{\frac{\log 1/\delta}{n_\unlabeled}}\right),
\end{align*}
with probability $1-4\delta$, therefore we have
\begin{align*}
    R(\tilde{h}_\CDF) \leq R(h^*) +2\left(\frac{P}{p}\sigma\right)^2 + O\left(\sqrt{\frac{\log 1/\delta}{n_Y}}\right) + O\left(\sqrt{\frac{\log 1/\delta}{n_\rank}}\right) + O\left(\sqrt{\frac{\log 1/\delta}{n_\unlabeled}}\right),
\end{align*}
with probability $1-4\delta$, which can be shown by the slight modification of the proof of Theorem~\ref{thm:general-generalization-bound}.

\section{Proofs} \label{sec:proofs}

\subsection{Proof of Lemma~\ref{lem:X_compare_distribution}} \label{sec:proof-of-X_compare_distribution}

Lemma~\ref{lem:X_compare_distribution} can be proved as follows.
\begin{proof}[Proof of Lemma~\ref{lem:X_compare_distribution}]
    Let $f_{\vec{X}^+}$ be the probability density function (PDF) of $P_{\vec{X}^+}$. From the definition of $\vec{X}^+$, we have
    \begin{align*}
        f_{\vec{X}^+}(\vec{x}) &= \frac{1}{Z}\iiint f_{\vec{X},Y}(\vec{x}, y)f_{\vec{X},Y}(\vec{x}', y')\indi{y > y'} \intd y\intd y' \intd \vec{x}'\\
        &= \frac{1}{Z}\int f_{\vec{X},Y}(\vec{x}, y) \left[\int f_Y(y')\indi{y > y'}\intd y'\right] \intd y \\
        &= \frac{1}{Z}\int f_{\vec{X},Y}(\vec{x}, y) F_Y(y) \intd y,
    \end{align*}
    where $Z$ is the normalizing constant and $f_{\vec{X},Y}(y)$ is the PDF of $P_{\vec{X},Y}$. Now, $Z$ is calculated as
    \begin{align*}
        Z &= \iint f_{\vec{X},Y}(\vec{x}, y) F_Y(y) \intd y \intd \vec{x}\\
          &= \int f_Y(y) F_Y(y) \intd y\\
          &= \frac12.
    \end{align*}
    The last equality holds from the integration by parts. Therefore, we have
    \begin{align*}
        \expect{\vec{X}^+}{\phi'(\vec{X}^+)} &= \int f_{\vec{X}^+}(\vec{x}) \phi'(\vec{x})\intd \vec{x}\\
        &= \int 2 \left\{\int f_{\vec{X},Y}(\vec{x}, y) F_Y(y) \intd y\right\} \phi'(\vec{x})\intd \vec{x}\\
        &= \expect{\vec{X},Y}{F_Y(Y)\phi'(\vec{x})}.
    \end{align*}
    The expectation over $P_{\vec{X}^-}$ can be derived in the same way.
\end{proof}

\subsection{Proof of Theorem~\ref{thm:uniform-taylor-bound}} \label{sec:proof-of-uniform-taylor-bound}

Here, we show the proof of Theorem~\ref{thm:uniform-taylor-bound}. First, we show the gap between $R$ and $R_\linear$ can be bounded as follows.

\begin{lem}\label{lem:linear-loss-difference}
For all $h \in \mathcal{H}$, such that  $|\phi'(h(\vec{x}))| \leq M$ for all $\vec{x} \in \mathcal{X}$, we have
\begin{align*}
    |R(h) - R_\linear(h; \lambda; w_1,w_2)| \leq M\mathrm{Err}(w_1,w_2)
\end{align*}
for all $\lambda\in\mathbb{R}$.
\end{lem}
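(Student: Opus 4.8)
The plan is to compute the population gap $R(h) - R_\linear(h;\lambda,w_1,w_2)$ directly and observe that essentially everything cancels. Writing $R(h)$ in the form \eqref{eq:bregman-divergence-trivial-decompose} and $R_\linear$ as in its definition (equivalently, expanding \eqref{eq:linear-approximation-2}), the constant $\mathfrak{C} = \expect{Y}{\phi(Y)}$ and the unlabeled term $\expect{\vec{X}}{\phi(h(\vec{X})) - h(\vec{X})\phi'(h(\vec{X}))}$ match verbatim. The $\lambda$-dependent contributions also cancel: the extra $\lambda\expect{\vec{X}}{\phi'(h(\vec{X}))}$ from the unlabeled part together with the $-\tfrac{\lambda}{2}$ coefficients on the $\vec{X}^+$ and $\vec{X}^-$ terms sum to zero, because $\expect{\vec{X}}{\phi'(h(\vec{X}))} = \tfrac12\expect{\vec{X}^+}{\phi'(h(\vec{X}^+))} + \tfrac12\expect{\vec{X}^-}{\phi'(h(\vec{X}^-))}$, which is itself an immediate consequence of Lemma~\ref{lem:X_compare_distribution}. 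In particular $R_\linear$ is independent of $\lambda$ at the population level, so the bound will hold for all $\lambda \in \mathbb{R}$ as claimed, and we are left with
\begin{align*}
    R(h) - R_\linear(h;\lambda,w_1,w_2) = w_1\expect{\vec{X}^+}{\phi'(h(\vec{X}^+))} + w_2\expect{\vec{X}^-}{\phi'(h(\vec{X}^-))} - \expect{\vec{X},Y}{Y\phi'(h(\vec{X}))}.
\end{align*}

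Next I would apply Lemma~\ref{lem:X_compare_distribution} to rewrite the first two terms as a single expectation over $P_{\vec{X},Y}$: substituting $\expect{\vec{X}^+}{\phi'(h(\vec{X}^+))} = 2\expect{\vec{X},Y}{F_Y(Y)\phi'(h(\vec{X}))}$ and $\expect{\vec{X}^-}{\phi'(h(\vec{X}^-))} = 2\expect{\vec{X},Y}{(1-F_Y(Y))\phi'(h(\vec{X}))}$ and collecting terms gives
\begin{align*}
    R(h) - R_\linear(h;\lambda,w_1,w_2) = \expect{\vec{X},Y}{\bigl(2w_1 F_Y(Y) + 2w_2(1-F_Y(Y)) - Y\bigr)\phi'(h(\vec{X}))}.
\end{align*}
Finally, taking absolute values, bounding $|\phi'(h(\vec{x}))| \leq M$ uniformly on $\mathcal{X}$ to pull $M$ outside, and noting that the remaining factor $|Y - 2w_1 F_Y(Y) - 2w_2(1-F_Y(Y))|$ depends on $Y$ alone, the expectation over $P_{\vec{X},Y}$ reduces to $\expect{Y}{|Y - 2w_1 F_Y(Y) - 2w_2(1-F_Y(Y))|} = \mathrm{Err}(w_1,w_2)$, which is exactly the asserted inequality.

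I do not anticipate a genuine obstacle here: the statement is a bookkeeping consequence of Lemma~\ref{lem:X_compare_distribution}, which already carries all the substantive content (namely that the linear combination of the $\vec{X}^+,\vec{X}^-$ expectations reproduces $\expect{\vec{X},Y}{Y\phi'(h(\vec{X}))}$ up to a CDF-reweighting of $Y$). The only points requiring care are keeping the signs in \eqref{eq:bregman-divergence-trivial-decompose} straight and confirming the $\lambda$-terms cancel so the bound is uniform in $\lambda$.
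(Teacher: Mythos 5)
Your proposal is correct and follows essentially the same route as the paper's proof: cancel the constant, unlabeled, and $\lambda$-dependent terms using $\expect{\vec{X}}{\phi'(h(\vec{X}))} = \tfrac12\expect{\vec{X}^+}{\phi'(h(\vec{X}^+))} + \tfrac12\expect{\vec{X}^-}{\phi'(h(\vec{X}^-))}$, rewrite the remaining gap via Lemma~\ref{lem:X_compare_distribution} as a single expectation of $\bigl(Y - 2w_1F_Y(Y) - 2w_2(1-F_Y(Y))\bigr)\phi'(h(\vec{X}))$ over $P_{\vec{X},Y}$, and bound $|\phi'|$ by $M$ to obtain $M\,\mathrm{Err}(w_1,w_2)$. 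You merely make the term-by-term cancellation more explicit than the paper does; no gap.
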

\begin{proof}
    From Lemma~\ref{lem:X_compare_distribution} and the fact $\expect{\vec{X}}{\phi'(\vec{X})} = \frac12\expect{\vec{X}^+}{\phi'(\vec{X}^+)} + \frac12\expect{\vec{X}^-}{\phi'(\vec{X}^-)}$, we have
    \begin{align*}
        &|R(h) - R_\linear(h;\lambda, w_1,w_2)| \\
        &\quad = \left|\expect{\vec{X},Y}{Y\phi'(h(\vec{X}))} - w_1\expect{\vec{X}^+}{\phi'(h(\vec{X}^+))} - w_2\expect{\vec{X}^-}{\phi'(h(\vec{X}^-))}\right| \\
        &\quad= \left|\int f_{\vec{X},Y}(\vec{x},y) \phi'(h(\vec{x}))\{y-2w_1F_Y(y)-2w_2(1-F_Y(y))\}\intd y \intd \vec{x} \right| \\
        &\quad\leq \int f_{\vec{X},Y}(\vec{x},y) \left|\phi'(h(\vec{x}))\right|\left|y-2w_1F_Y(y)-2w_2(1-F_Y(y))\right|\intd y \intd \vec{x} \\
        &\quad\leq M \int f_{Y}(y)\left|y-2w_1F_Y(y)-2w_2(1-F_Y(y))\right|\intd y \\
        &\quad\leq M \mathrm{Err}(w_1,w_2).
    \end{align*}
\end{proof}

Now, Theorem~\ref{thm:uniform-taylor-bound} can be derived as follows.
\begin{proof}[Proof of Theorem~\ref{thm:uniform-taylor-bound}]
    Let $\tilde d, \tilde d'$ be the pseudo-dimensions defined as
\begin{align*}
    &\tilde d = \mathrm{Pdim}(\{\vec{x} \to \phi'(h(\vec{x})) ~|~h \in \mathcal{H}\}),\\
    &\tilde d' = \mathrm{Pdim}(\{\vec{x} \to h(\vec{x})\phi'(h(\vec{x})) - \phi(h(\vec{x}))~|~h \in \mathcal{H}\}),
\end{align*}
where $\mathrm{Pdim}(\mathcal{F})$ denotes the pseudo-dimension of the functional space $\mathcal{F}$.
    From the assumptions in Theorem~\ref{thm:uniform-taylor-bound}, using the discussion in \citet[Theorem 10.6]{Mohri2012}, each of following bound holds with probability $1-\delta$ for all $h \in \mathcal{H}$.
     \begin{align*}
         &\left|\expect{\vec{X}^+}{\phi'(h(\vec{X}^+))}-\frac{1}{n_\rank} \sum_{\vec{x}_i^+ \in \mathcal{D}^+_\rank} \phi'(h(\vec{x}_i^+)) \right| \leq M\sqrt{\frac{2\tilde{d}\log \frac{\mathrm{e}n_\rank}{\tilde{d}}}{n_\rank}} + M\sqrt{\frac{\log \frac{1}{\delta}}{2n_\rank}},\\
         &\left|\expect{\vec{X}^-}{\phi'(h(\vec{X}^-))}-\frac{1}{n_\rank} \sum_{\vec{x}_i^- \in \mathcal{D}^-_\rank} \phi'(h(\vec{x}_i^-)) \right| \leq M\sqrt{\frac{2\tilde{d}\log \frac{\mathrm{e}n_\rank}{\tilde{d}}}{n_\rank}} + M\sqrt{\frac{\log \frac{1}{\delta}}{2n_\rank}},\\
         &\left|\expect{\vec{X}}{g(\vec{X})}-\frac{1}{n_\unlabeled} \sum_{\vec{x}_i \in \mathcal{D}^+_\unlabeled} g(\vec{x}_i) \right|  \leq m\sqrt{\frac{2\tilde{d}'\log \frac{\mathrm{e}n_\unlabeled}{\tilde{d}'}}{n_\unlabeled}} + m\sqrt{\frac{\log \frac{1}{\delta}}{2n_\unlabeled}},
     \end{align*}
     where $g(\vec{x}) = h(\vec{x})\phi'(h(\vec{x})) + \phi(h(\vec{x}))$. From the uniform bound, we have
     \begin{align*}
        &|R_\linear(h; w_1, w_2) - \hat{R}_\linear(h;\lambda, w_1, w_2)|\\
        &\quad \leq \left(\left|w_1-\frac\lambda 2\right|+\left|w_2-\frac\lambda 2\right|\right)\left(M\sqrt{\frac{2\tilde d\log \frac{\mathrm{e}n_\rank}{\tilde d}}{n_\rank}} + M\sqrt{\frac{\log \frac{1}{\delta}}{2n_\rank}}\right)\\
        &\quad~~~~~~~~~~+(m+\lambda M)\left(\sqrt{\frac{2\tilde{d}'\log \frac{\mathrm{e}n_\unlabeled}{\tilde{d}'}}{n_\unlabeled}} +  \sqrt{\frac{\log \frac{1}{\delta}}{2n_\unlabeled}}\right)
     \end{align*}
     with probability $1-3\delta$ for all $h\in\mathcal{H}$. Hence, with probability $1-3\delta$, we have
     \begin{align*}
         &R(\hat{h}_\linear) - R(h^*)\\
         & \quad \leq R_\linear(\hat{h}_\linear;\lambda,w_1,w_2) - R_\linear(h^*;\lambda,w_1,w_2) + |R(h^*) - R_\linear(h^*;\lambda,w_1,w_2)|\\
         &\quad~~~~~ + |R(\hat{h}_\linear) - R_\linear(\hat{h}_\linear;\lambda, w_1,w_2)| \\
         &\quad \leq (R_\linear(\hat{h}_\linear;\lambda,w_1,w_2)- \hat{R}_\linear(h^*;\lambda,w_1,w_2)) \\
         &\quad~~~~~  - (R_\linear(h^*;\lambda,w_1,w_2) - \hat{R}_\linear(h^*;\lambda,w_1,w_2)) +2M\mathrm{Err}(w_1,w_2)\\
         &\quad \leq (R_\linear(\hat{h}_\linear;\lambda,w_1,w_2)- \hat{R}_\linear(\hat{h}_\linear;\lambda,w_1,w_2)) \\
         &\quad~~~~~  - (R_\linear(h^*;,\lambda,w_1,w_2) - \hat{R}_\linear(h^*;\lambda,w_1,w_2)) +2M\mathrm{Err}(w_1,w_2)\\
         &\quad\leq O\left(\sqrt{\frac{\log 1/\delta}{n_\unlabeled}}\right) + O\left(\sqrt{\frac{\log 1/\delta}{n_\rank}}\right)+2M\mathrm{Err}(w_1,w_2),
     \end{align*}
     where the second inequality holds from the fact $\hat{R}_\linear(\hat{h}_\linear;\lambda,w_1,w_2) \leq \hat{R}_\linear(\hat{h}^*;\lambda,w_1,w_2)$ and Lemma~\ref{lem:linear-loss-difference}.
\end{proof}

\subsection{Proof of Theorem~\ref{thm:optimal-lambda-in-uniform}} \label{sec:proof-of-optimal-lambda-in-uniform}
Theorem~\ref{thm:optimal-lambda-in-uniform} can be shown as follows.
\begin{proof}[Proof of Theorem~\ref{thm:optimal-lambda-in-uniform}]
     The variance of $\hat{R}_\linear$ denoted as $\Var{}{\hat{R}_\linear(h;\lambda,w_1,w_2)}$ can be expressed as
    \begin{align*}
        \Var{}{\hat{R}_\linear(h;\lambda,w_1,w_2)} = \left(w_1-\frac{\lambda}{2}\right)^2\frac{\sigma^2_+}{n_\rank} + \left(w_2-\frac{\lambda}{2}\right)^2\frac{\sigma^2_-}{n_\rank}
    \end{align*}
    when $n_\unlabeled \to \infty$.  By solving the above quadratic optimization problem, we have
    \begin{align*}
        \argmin_\lambda \Var{}{\hat{R}_\linear(h; \lambda,w_1,w_2)} =  \frac{2(w_1\sigma^2_+ +w_2\sigma^2_-)}{\sigma^2_+ + \sigma^2_-}.
    \end{align*}
\end{proof}

\subsection{Proof of Theorem~\ref{thm:impossibility}}

We can construct a simple example satisfies the conditions in Theorem~\ref{thm:impossibility} as follows.
\begin{proof}
    Let $f_{\vec{X},Y},\tilde{f}_{\vec{X},Y}$ be the PDF of $P_{\vec{X},Y}, \tilde{P}_{\vec{X},Y}$, respectively. If we consider $\mathcal{X} = [-1,1]$ and $\mathcal{Y} = [0,4]$ and these PDF to be
    \begin{align*}
        &f_{\vec{X},Y}(x,y) = \begin{cases}
            \frac{1}{6} & (y \in [0, 2] \cup [3, 4]),\\
            0 & (\mathrm{otherwise}),
        \end{cases}\\
        &\tilde{f}_{\vec{X},Y}(x,y) = \begin{cases}
            \frac{1}{8} &(x \in [-1,0), y \in [0, 1)),\\
            \frac{1}{4} &(x \in [-1,0), y \in [1, 2)),\\
            \frac{1}{8} &(x \in [-1,0), y \in [3, 4]),\\
            \frac{5}{24} &(x \in [0,1], y \in [0, 1)),\\
            \frac{1}{12} &(x \in [0,1], y \in [1, 2)),\\
            \frac{5}{24} &(x \in [0,1], y \in [3, 4]),\\
            0 & (\mathrm{otherwise}).
        \end{cases}
    \end{align*}
    Then, by the simple calculation, we can see that they have the same PDF $f_{\vec{X}}(x), f_Y(y), f_{\vec{X}^+,\vec{X}^-}(x^+,x^-)$, each represents the PDF of $P_{\vec{X}}, P_Y, P_{\vec{X}^+,\vec{X}^-}$, respectively, which are
    \begin{align*}
        &f_{\vec{X}}(x) = 0.5,\\
        &f_Y(y) = \begin{cases}
            \frac{1}{3} & (y \in [0, 2] \cup [3, 4]),\\
            0 & (\mathrm{otherwise}),
        \end{cases}\\
        &f_{\vec{X}^+,\vec{X}^-}(x^+,x^-) = 0.25.
    \end{align*}
    However, the conditional expectation $\expect{Y|\vec{X}=x}{Y}$ defined on $P_{\vec{X},Y}$ is
    \begin{align*}
        \expect{Y|\vec{X}=x}{Y} = \frac{11}{6},
    \end{align*}
    while the conditional expectation $\tilde{\mathbb{E}}_{Y|\vec{X}=x}[Y]$ defined on $\tilde{P}_{\vec{X},Y}$ is
    \begin{align*}
        \tilde{\mathbb{E}}_{Y|\vec{X}=x}[Y] = \begin{cases}
        \frac{7}{4}& (x \in [-1,0)),\\
        \frac{23}{12}& (x \in [0,1]).\\
        \end{cases}
    \end{align*}
\end{proof}

\subsection{Proof of Theorem~\ref{thm:general-generalization-bound}} \label{sec:proof-of-general-generalization-bound}

The Theorem~\ref{thm:general-generalization-bound} can be shown as follows.
\begin{proof}[Proof of Theorem~\ref{thm:general-generalization-bound}]
    We first show that under the conditions, we have
    \begin{align*}
        \|h_{\mathrm{true}}(\vec{x}) - h_\CDF(\vec{x})\|_\infty \leq \frac{\sigma P}{p}.
    \end{align*}
     Since $(F_Y(y))' = f_Y(y) \leq P$ and $(F^{-1}_Y(y))' = 1/f_Y(y) \leq 1/p$, $F_Y(y), F^{-1}_Y(y)$ are $P, 1/p$-Lipschitz continuous, respectively. Therefore, we have
    \begin{align*}
        h_\CDF(\vec{x}) &= F_Y^{-1}(\expect{Y|\vec{X}=\vec{x}}{F_Y(Y)})\\
        &= F_Y^{-1}(\expect{\epsilon}{F_Y(h_{\mathrm{true}}(\vec{x}) + \varepsilon)})\\
        &\leq F_Y^{-1}(F_Y(h_{\mathrm{true}}(\vec{x}) + \sigma P)\\
        &\leq h_{\mathrm{true}}(\vec{x}) + \frac{\sigma P}{p}.
    \end{align*}
    for all $\vec{x}\in \mathcal{X}$. With the same discussion, we have $|h_\CDF(\vec{x}) - h_\mathrm{true}(\vec{x})| \leq \frac{\sigma P}{p}$. Therefore, we have
    \begin{align*}
        \|h_{\mathrm{true}}(\vec{x}) - h_\CDF(\vec{x})\|_\infty \leq \frac{\sigma P}{p}.
    \end{align*}
    
    Now, if $\phi(x) = x^2$, which means $R(h) = \expect{\vec{X},Y}{(h(\vec{X})-Y)^2}$, we have
    \begin{align*}
        R(\hat{h}_\CDF)& = \expect{\vec{X},Y}{(\hat{h}_\CDF(\vec{x})-Y)^2} \\
        & = \expect{\vec{X}}{(\hat{h}_\CDF(\vec{X})-h_{\mathrm{true}}(\vec{X}))^2} + \expect{\vec{X},Y}{(h_{\mathrm{true}}(\vec{X})-Y)^2}\\
        &\leq R(h_{\mathrm{true}})+ 2\expect{\vec{X}}{(\hat{h}_\CDF(\vec{x})-h_\CDF(\vec{x}))^2}+ 2\expect{\vec{X}}{(h_{\mathrm{true}}(\vec{x})-h_\CDF(\vec{x}))^2}.
    \end{align*}
    Since $\|h_\CDF(\vec{x}) - h_{\mathrm{true}}(\vec{X})\|_\infty \leq \frac{\sigma P}{p}$, we have
    \begin{align*}
        \expect{\vec{X}}{(h_{\mathrm{true}}(\vec{X})-h_\CDF(\vec{X}))^2} \leq \left(\frac{\sigma P}{p}\right)^2.
    \end{align*}
    Furthermore, using the characteristic of expectation, if $\phi(x) = x^2$, which means $R_\CDF(h) = \expect{\vec{X},Y}{(F_Y(h(\vec{X}))-F_Y(Y))^2}$, we have
    \begin{align*}
        &R_\CDF(\hat{h}_\CDF)\\
        &\quad= \expect{\vec{X},Y}{(F_Y(\hat{h}_\CDF(\vec{X}))-F_Y(Y))^2} \\
        &\quad= \expect{\vec{X},Y}{(F_Y(\hat{h}_\CDF(\vec{X}))-F_Y(h_\CDF(\vec{X})))^2}  + \expect{\vec{X},Y}{(F_Y(Y) - F_Y(h_\CDF(\vec{X})))^2}\\
        &\quad= \expect{\vec{X},Y}{(F_Y(\hat{h}_\CDF(\vec{X}))-F_Y(h_\CDF(\vec{X})))^2} + R_\CDF(h_\CDF).
    \end{align*}
    Since $(F_Y(y))' \geq p$, we have
    \begin{align*}
        \expect{\vec{X}}{(\hat{h}_\CDF(\vec{X}) - h_\CDF(\vec{X}))^2} &\leq \frac{1}{p^2} \expect{\vec{X},Y}{(F_Y(\hat{h}_\CDF(\vec{X}))-F_Y(h_\CDF(\vec{X})))^2}\\
        &= \frac{1}{p^2} \left(R_\CDF(\hat{h}_\CDF) - R_\CDF(h_\CDF)\right)\\
        &\leq O\left(\sqrt{\frac{\log 1/\delta}{n_\unlabeled}}\right) + O\left(\sqrt{\frac{\log 1/\delta}{n_\rank}}\right)
    \end{align*}
    with probability $1-3\delta$, where the last inequality holds from the same discussion as in Theorem~\ref{thm:uniform-taylor-bound}. Note that $|\phi'(F_Y(h(\vec{x})))|, |F_Y(h(\vec{x}))\phi'(F_Y(h(\vec{x})))-\phi(F_Y(h(\vec{x})))|$ are bounded since $F_Y(h(\vec{x})) \in [0,1]$ by definition. Combining these inequalities, we can see that
    \begin{align*}
        R(\hat{h}_\CDF) &\leq R(h_{\mathrm{true}}(\vec{x}))+ 2 \left(\frac{\sigma P}{p}\right)^2 +O\left(\sqrt{\frac{\log 1/\delta}{n_\unlabeled}}\right) + O\left(\sqrt{\frac{\log 1/\delta}{n_\rank}}\right)
    \end{align*}
    with probability $1-3\delta$.
\end{proof}

\end{document}